%%%%%%%% ICML 2024 EXAMPLE LATEX SUBMISSION FILE %%%%%%%%%%%%%%%%%

\documentclass{article}

% Recommended, but optional, packages for figures and better typesetting:
\usepackage{microtype}
\usepackage{graphicx}
\usepackage{subfigure}
\usepackage{booktabs} % for professional tables

% hyperref makes hyperlinks in the resulting PDF.
% If your build breaks (sometimes temporarily if a hyperlink spans a page)
% please comment out the following usepackage line and replace
% \usepackage{icml2024} with \usepackage[nohyperref]{icml2024} above.
\usepackage{hyperref}

% Attempt to make hyperref and algorithmic work together better:

% Use the following line for the initial blind version submitted for review:
% \usepackage{icml2024}

% If accepted, instead use the following line for the camera-ready submission:
\usepackage[accepted]{icml2024}

% For theorems and such
\usepackage{amsmath}
\usepackage{amssymb}
\usepackage{mathtools}
\usepackage{amsthm}
\usepackage{amsfonts}
\usepackage{mathrsfs}
\usepackage{multirow}
\usepackage{booktabs}
\usepackage{float}
\usepackage{bm}
\usepackage{colortbl}
\usepackage{enumitem}
\usepackage[T1]{fontenc}
\usepackage{aecompl}
\usepackage{dsfont}
\usepackage{enumitem}
\usepackage{subfigure}
\usepackage{balance}

% if you use cleveref..
\usepackage[capitalize,noabbrev]{cleveref}

\newcommand{\norm}[1]{\left\lVert#1\right\rVert}

%%%%%%%%%%%%%%%%%%%%%%%%%%%%%%%%
% THEOREMS
%%%%%%%%%%%%%%%%%%%%%%%%%%%%%%%%
\theoremstyle{plain}
\newtheorem{theorem}{Theorem}[section]
\newtheorem{proposition}[theorem]{Proposition}
\newtheorem{lemma}[theorem]{Lemma}

\theoremstyle{definition}
\newtheorem{definition}[theorem]{Definition}
\newtheorem{assumption}[theorem]{Assumption}
\theoremstyle{remark}

% Todonotes is useful during development; simply uncomment the next line
%    and comment out the line below the next line to turn off comments
%\usepackage[disable,textsize=tiny]{todonotes}
\usepackage[textsize=tiny]{todonotes}

% The \icmltitle you define below is probably too long as a header.
% Therefore, a short form for the running title is supplied here:
\icmltitlerunning{Towards Certified Unlearning for Deep Neural Networks}

\begin{document}

\twocolumn[
\icmltitle{Towards Certified Unlearning for Deep Neural Networks}

% It is OKAY to include author information, even for blind
% submissions: the style file will automatically remove it for you
% unless you've provided the [accepted] option to the icml2024
% package.

% List of affiliations: The first argument should be a (short)
% identifier you will use later to specify author affiliations
% Academic affiliations should list Department, University, City, Region, Country
% Industry affiliations should list Company, City, Region, Country

% You can specify symbols, otherwise they are numbered in order.
% Ideally, you should not use this facility. Affiliations will be numbered
% in order of appearance and this is the preferred way.
% \icmlsetsymbol{equal}{*}

\begin{icmlauthorlist}
\icmlauthor{Binchi Zhang}{uva}
\icmlauthor{Yushun Dong}{uva}
\icmlauthor{Tianhao Wang}{uva}
\icmlauthor{Jundong Li}{uva}
\end{icmlauthorlist}

\icmlaffiliation{uva}{University of Virginia, Charlottesville, VA, USA}

\icmlcorrespondingauthor{Jundong Li}{jundong@virginia.edu}

% You may provide any keywords that you
% find helpful for describing your paper; these are used to populate
% the "keywords" metadata in the PDF but will not be shown in the document
\icmlkeywords{Machine Unlearning, Certified Unlearning, Deep Neural Networks}

\vskip 0.3in
]

% this must go after the closing bracket ] following \twocolumn[ ...

% This command actually creates the footnote in the first column
% listing the affiliations and the copyright notice.
% The command takes one argument, which is text to display at the start of the footnote.
% The \icmlEqualContribution command is standard text for equal contribution.
% Remove it (just {}) if you do not need this facility.

\printAffiliationsAndNotice{}  % leave blank if no need to mention equal contribution
% \printAffiliationsAndNotice{\icmlEqualContribution} % otherwise use the standard text.

\begin{abstract}
In the field of machine unlearning, certified unlearning has been extensively studied in convex machine learning models due to its high efficiency and strong theoretical guarantees.
However, its application to deep neural networks (DNNs), known for their highly nonconvex nature, still poses challenges.
To bridge the gap between certified unlearning and DNNs, 
% we study the problem of certified unlearning and propose several simple techniques to extend certified unlearning methods to non-convex objectives.
we propose several simple techniques to extend certified unlearning methods to nonconvex objectives.
% Additionally, we integrate an inverse Hessian approximation technique into our certified unlearning framework to enhance efficiency without compromising certification guarantees.
To reduce the time complexity, we develop an efficient computation method by inverse Hessian approximation without compromising certification guarantees.
In addition, we extend our discussion of certification to nonconvergence training and sequential unlearning, considering that real-world users can send unlearning requests at different time points.
% Furthermore, considering that users in the real world can send unlearning requests at different time points, we develop the sequential certified unlearning to unlearn different groups of user data sequentially.
Extensive experiments on three real-world datasets demonstrate the efficacy of our method and the advantages of certified unlearning in DNNs.
\end{abstract}

\section{Introduction}
In modern machine learning applications, model training often requires access to vast amounts of user data. 
For example, computer vision models often rely on images sourced from platforms such as Flickr, shared by its users~\citep{thomee2016yfcc100m}. 
However, as concerns about user data privacy grow, recent legislation such as the General Data Protection Regulation~\citep{GDPR} and the California Consumer Privacy Act~\citep{CCPA} have underscored the importance of \textit{the right to be forgotten}, which enables users to request the deletion of their data from entities that store it. 
The emergence of \textit{the right to be forgotten} has prompted the development of a new research domain known as machine unlearning~\citep{sommer2022athena,mercuri2022introduction,nguyen2022survey,xu2023machine,zhang2023review}, i.e., deleting user data and its lineage from a machine learning model.
Although the target data can be effectively unlearned by retraining the model from scratch, this method can incur a high computational cost, limiting its practical applications.
% Although one straightforward approach to achieve unlearning is to retrain the model after deleting the target user data, this method incurs a high computational cost, limiting its practical applications.
To address this problem, certified unlearning was proposed to find an efficient approximation of the retrained model.
In particular, the difference between the distributions of the unlearned model and the retrained model is bounded, making certified unlearning a stringent unlearning notion.

Despite the extensive study of certified unlearning in various machine learning models such as linear models~\citep{guo2020certified,izzo2021approximate,mahadevan2021certifiable}, general convex models~\citep{ullah2021machine,sekhari2021remember,neel2021descent}, Bayesian models~\citep{nguyen2020variational}, sum-product networks~\citep{becker2022certified}, and graph neural networks~\citep{pan2023unlearning,chien2023efficient,wu2023certified,wu2023gif,dong2024idea}, the understanding of certified unlearning for \textit{deep neural networks} (DNNs) remains nascent.
% Some previous works~\citep{golatkar2020eternal,golatkar2021mixed,mehta2022deep} have investigated the unlearning problem of DNNs.
% Most previous unlearning methods applied to DNNs~\citep{golatkar2020eternal,golatkar2021mixed,tarun2023fast,chundawat2023can} have exhibited limitations, as they primarily focused on analyzing the KL divergence between unlearned and retrained models, without guaranteeing the exact differences in the distributions.
Most of the previous unlearning methods applied to DNNs~\citep{golatkar2020eternal,golatkar2021mixed,tarun2023fast,chundawat2023can} only introduced empirical approximation of the retrained model and failed to obtain a theoretical certification.
Although \citeauthor{mehta2022deep} tentatively analyzed the certification for DNNs, their results strongly rely on the convexity assumption, which is unrealistic and yields a significant gap between certified unlearning and DNNs.
% In fact, the strong dependence of previous results on the convexity assumption yields a significant gap between certified unlearning and the challenges posed by DNNs.

To bridge the gap between certified unlearning and DNNs, we investigate the problem of certified unlearning without relying on convexity assumptions.
First, we divide certified unlearning into two steps, estimating the retrained model with a bounded error and adding random noises corresponding to the approximation error bound.
Existing works~\citep{golatkar2020eternal,golatkar2021mixed,mehta2022deep,warnecke2023machine} typically estimate the retrained model based on a single-step Newton update for convex objectives.
In this paper, we demonstrate that by making simple modifications to the Newton update, we can establish an approximation error bound for nonconvex (more precisely, weakly convex~\citep{davis2019proximally,davis2019stochastic}) functions such as DNNs.
It is worth noting that our proposed techniques can be adapted to different approximate unlearning strategies for convex models to improve their soundness in DNNs.
To improve the efficiency of the Newton update, we follow previous work to exploit a computationally efficient method to estimate the inverse Hessian in the Newton update and prove that the efficient method preserves a bounded approximation error.
In addition, we explore the flexibility of our method to complex real-world scenarios in two specific cases where strategies such as early stopping can stop the training from convergence and users can send unlearning requests at different time points.
% In addition, users in the real world can send unlearning requests at different time points, which requires the unlearning to conduct in a sequential manner. 
Consequently, we adapt our certified unlearning approach to the nonconvergence training (the trained model is not a local optimum) and the sequential manner (the current unlearning process stems from a formerly unlearned model) without compromising certification guarantees.
Finally, we conduct extensive experiments including ablation studies to verify the effectiveness of our certified unlearning for DNNs in practice.
In particular, we use different metrics such as membership inference attacks~\citep{chen2021machine} and relearn time~\citep{golatkar2020eternal} to evaluate the unlearning performance in preserving the privacy of unlearned samples, providing empirical evidence of the effectiveness of our approach.
% We also demonstrate the stringency of certified unlearning by showing that it outperforms other unlearning baselines with a relatively loose certification budget and the capacity of certified unlearning to efficiently balance the unlearning and prediction utilities.
Further analysis explores the advantages of certified unlearning, e.g., stringency (certified unlearning outperforms other unlearning baselines with a relatively loose certification budget), efficiency, and robustness under sequential unlearning.
% We also obtain that certified unlearning is effective and efficient in balancing unlearning and prediction utilities.
% We also observe the limited effectiveness of the single-step Newton update in estimating the retrained DNN model.
% Our contributions are threefold.
% \begin{itemize}[leftmargin=*]
% \item We divide certified unlearning into two parts, estimating the retrained model with a bounded error and adding corresponding noises, and propose an approximation method based on the Newton update with a bounded error for non-convex models such as DNNs.
% \item We tackle the challenge of efficiency in certified unlearning by developing an efficient method to compute the Newton update while keeping the certification. 
% Furthermore, we extend our certified unlearning approach in a sequential manner to improve its adaptability in practice.
% \item We conduct extensive experiments with three prevalent DNNs on three datasets. 
% Experimental results verify the effectiveness and efficiency of certified unlearning for DNNs and also show the limitation of the single-step Newton update in certified unlearning.
% \end{itemize}

\section{Certified Unlearning}

\noindent\textbf{\textit{Preliminary.}}
Let $\mathcal{D}$ be a training dataset with $n$ data samples derived from the sample space $\mathcal{Z}$, and let $\mathcal{H}$ be the parameter space of a hypothesis class.
Let $\mathcal{A}:\mathcal{Z}^n\rightarrow\mathcal{H}$ be a randomized learning process that takes the training set $\mathcal{D}$ as input and outputs the optimal $\bm{w}^*\in\mathcal{H}$ that minimizes the empirical risk on $\mathcal{D}$ as
\begin{equation}
\bm{w}^*=\mathcal{A}(\mathcal{D})=\mathrm{argmin}_{\bm{w}\in\mathcal{H}}\mathcal{L}(\bm{w},\mathcal{D}).
\end{equation}
$\mathcal{L}$ is the empirical risk function that measures the error of a model over each training sample as $\mathcal{L}(\bm{w},\mathcal{D})=\frac{1}{n}\sum_{\bm{x}\in\mathcal{D}}l(\bm{w},\bm{x})$, where $l(\bm{w},\bm{x})$ is the loss function of the model $\bm{w}$ on the sample $\bm{x}$, e.g., cross-entropy loss and mean squared error. 
After the learning process, we obtain a trained model $\bm{w}^*=\mathcal{A}(\mathcal{D})$. 

In machine unlearning, to remove some training samples $\mathcal{D}_u\subset\mathcal{D}$ from the trained model $\bm{w}^*$, a common way is to leverage a randomized unlearning process $\mathcal{U}$ to update $\bm{w}^*$ and obtain an unlearned model $\bm{w}^-=\mathcal{U}(\bm{w}^*,\mathcal{D}_u,\mathcal{D})$, consequently.
We denote the number of unlearned samples as $n_u=|\mathcal{D}_u|$.
A straightforward but exact unlearning process is to retrain the model from scratch. 
Let $\tilde{\bm{w}}^*$ be the retrained model and $\mathcal{D}_r=\mathcal{D}\backslash\mathcal{D}_u$, then we have $\tilde{\bm{w}}^*=\mathrm{argmin}_{\bm{w}\in\mathcal{H}}\mathcal{L}(\bm{w},\mathcal{D}_r)$.
Despite the rigor of retraining, the high computational cost makes it a trivial method that cannot be used in practice.
Instead, certified unlearning processes find an unlearned model similar to the re-trained model, but with much less computation cost.
Next, we introduce the definition of certified unlearning~\citep{guo2020certified}.
\begin{definition}~\label{def:certified unlearning} ($\varepsilon-\delta$ certified unlearning)
Let $\mathcal{D}$ be a training set, $\mathcal{D}_u\subset\mathcal{D}$ be an unlearned set, $\mathcal{D}_r=\mathcal{D}\backslash\mathcal{D}_u$ be the retained set, $\mathcal{H}$ be the hypothesis space, and $\mathcal{A}$ be a learning process. $\mathcal{U}$ is an $\varepsilon-\delta$ certified unlearning process iff $\forall\;\mathcal{T}\subseteq\mathcal{H}$, we have
\begin{equation}
\scriptsize
\begin{aligned}
% \mathrm{Pr}\left(\mathcal{U}\left(\mathcal{D},\mathcal{D}_u,\mathcal{A}\left(\mathcal{D}\right)\right)\in\mathcal{T}\right)\leq e^{\varepsilon}\mathrm{Pr}\left(\mathcal{A}\left(\mathcal{D}_r\right)\in\mathcal{T}\right)+\delta, \\
\mathrm{Pr}\left(\mathcal{U}\left(\mathcal{D},\mathcal{D}_u,\mathcal{A}\left(\mathcal{D}\right)\right)\in\mathcal{T}\right)\leq e^{\varepsilon}\mathrm{Pr}\left(\mathcal{U}\left(\mathcal{D}_r,\emptyset,\mathcal{A}\left(\mathcal{D}_r\right)\right)\in\mathcal{T}\right)+\delta, \\
\mathrm{Pr}\left(\mathcal{U}\left(\mathcal{D}_r,\emptyset,\mathcal{A}\left(\mathcal{D}_r\right)\right)\in\mathcal{T}\right)\leq e^{\varepsilon}\mathrm{Pr}\left(\mathcal{U}\left(\mathcal{D},\mathcal{D}_u,\mathcal{A}\left(\mathcal{D}\right)\right)\in\mathcal{T}\right)+\delta. \\
\end{aligned}
\end{equation}
\end{definition}

\noindent\textbf{\textit{Certified Unlearning and Differential Privacy.}}
It is worth noting that \cref{def:certified unlearning} is derived from the notion of differential privacy~\citep{dwork2006differential}. 
Specifically, a randomized algorithm $\mathcal{M}:\mathbb{N}^n\rightarrow\mathcal{R}$ gives $\varepsilon-\delta$ differential privacy iff $\forall\;\mathcal{T}\subseteq\mathcal{R}$ and $\forall\;\bm{x},\bm{y}\in\mathbb{N}^n$ such that $\|\bm{x}-\bm{y}\|_1=b$,
\begin{equation}
\mathrm{Pr}(\mathcal{M}(\bm{x})\in\mathcal{T})\leq e^{\varepsilon}\mathrm{Pr}(\mathcal{M}(\bm{y})\in\mathcal{T})+\delta.
\end{equation}
Here, $\bm{x}$ and $\bm{y}$ can be seen as two adjacent datasets with $b$ different data records.
Noting the similarity of certified unlearning and differential privacy, we clarify their relationship as follows.
\begin{proposition}\label{pro:relationship}
If learning algorithm $\mathcal{A}$ provides $\varepsilon-\delta$ differential privacy, $\mathcal{A}(\mathcal{D})$ is an $\varepsilon-\delta$ certified unlearned model.
\end{proposition}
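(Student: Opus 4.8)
The plan is to exhibit an explicit unlearning process $\mathcal{U}$ and then verify the two inequalities of \cref{def:certified unlearning} directly from the differential privacy guarantee of $\mathcal{A}$. The natural candidate is the \emph{trivial} unlearning map that ignores the deletion request and simply returns the already-trained model, i.e. $\mathcal{U}(\mathcal{D},\mathcal{D}_u,\mathcal{A}(\mathcal{D}))=\mathcal{A}(\mathcal{D})$. Under this choice, the two conditions in \cref{def:certified unlearning} collapse to a comparison between the output distributions of $\mathcal{A}$ on $\mathcal{D}$ and on $\mathcal{D}_r$, which is exactly the kind of statement the differential privacy inequality controls.

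First I would record that $\mathcal{D}$ and $\mathcal{D}_r=\mathcal{D}\setminus\mathcal{D}_u$ are adjacent datasets in the differential privacy sense: deleting the $n_u=|\mathcal{D}_u|$ samples changes exactly $n_u$ of the record counts, so in the histogram representation $\|\mathcal{D}-\mathcal{D}_r\|_1=n_u$. Taking the adjacency parameter $b=n_u$, the pair $(\mathcal{D},\mathcal{D}_r)$ is therefore one of the pairs to which the $\varepsilon$-$\delta$ differential privacy inequality applies.

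Next I would instantiate that inequality twice. Applying it with $\bm{x}=\mathcal{D}$ and $\bm{y}=\mathcal{D}_r$ yields, for every $\mathcal{T}\subseteq\mathcal{H}$,
\[
\mathrm{Pr}(\mathcal{A}(\mathcal{D})\in\mathcal{T})\leq e^{\varepsilon}\,\mathrm{Pr}(\mathcal{A}(\mathcal{D}_r)\in\mathcal{T})+\delta,
\]
which is precisely the first line of \cref{def:certified unlearning} after substituting $\mathcal{U}(\mathcal{D},\mathcal{D}_u,\mathcal{A}(\mathcal{D}))=\mathcal{A}(\mathcal{D})$. Since the differential privacy definition quantifies over \emph{all} ordered pairs $\bm{x},\bm{y}$ with $\|\bm{x}-\bm{y}\|_1=b$, I would then swap the roles, taking $\bm{x}=\mathcal{D}_r$ and $\bm{y}=\mathcal{D}$, to obtain the reverse inequality and hence the second line. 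This completes the verification of both bounds.

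There is essentially no computational obstacle here; the content is definitional, and the main subtlety is simply matching the adjacency parameter. The conclusion requires the privacy guarantee to hold for datasets differing in $b=n_u$ records, so the statement should be read as assuming $\mathcal{A}$ is $\varepsilon$-$\delta$ differentially private at group size $n_u$. I would make this dependence explicit, and remark that a single-record guarantee would only deliver the claim after invoking group privacy, which scales $\varepsilon$ (and $\delta$) with $n_u$; that is the one place where any quantitative degradation would enter.
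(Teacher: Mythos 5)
Your proof is correct and follows essentially the same route as the paper's: take $\mathcal{U}$ to be the identity map on the trained model and invoke the differential privacy inequality in both directions for the pair $(\mathcal{D},\mathcal{D}_r)$. You are in fact more careful than the paper's own proof, which silently treats $\mathcal{D}$ and $\mathcal{D}_r$ as adjacent at the required granularity (and contains an apparent typo, comparing against $\mathcal{A}(\mathcal{D}_u)$ rather than $\mathcal{A}(\mathcal{D}_r)$); your explicit remark that the privacy guarantee must hold at group size $b=n_u$, or else be derived via group privacy with the attendant scaling of $\varepsilon$ and $\delta$, is exactly the right caveat.
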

\cref{pro:relationship} indicates that differential privacy is a sufficient condition of certified unlearning.
One of the advantages of differential privacy is its weak assumptions: differential privacy can be achieved without any requirements on convexity or continuity.
Such property perfectly fits DNNs, which are usually nonconvex.
Luckily, \cref{pro:relationship} enables achieving certified unlearning by borrowing the techniques in differential privacy.
% , such as the \textit{Gaussian mechanism}.
Consequently, we have the following theorem for certified unlearning.
\begin{theorem}\label{thm:certified unlearning}
% Let $\mathcal{F}$ be a function with the same domain and range as $\mathcal{U}$. 
Let $\Tilde{\bm{w}}^*$ be the empirical minimizer over $\mathcal{D}_r$ and $\tilde{\bm{w}}=\mathcal{F}(\bm{w}^*,\mathcal{D}_u,\mathcal{D})$ be an approximation of $\Tilde{\bm{w}}^*$.
Define $\Delta$ as an upper bound of $\|\Tilde{\bm{w}}-\tilde{\bm{w}}^*\|_2$, then we have
$\mathcal{U}(\bm{w}^*,\mathcal{D}_u,\mathcal{D})=\tilde{\bm{w}}+Y$ is an $\varepsilon-\delta$ certified unlearning process, where $Y\sim\mathcal{N}(0,\sigma^2\bm{I})$ and $\sigma\geq\frac{\Delta}{\varepsilon}\sqrt{2\mathrm{ln(1.25/\delta)}}$.
\end{theorem}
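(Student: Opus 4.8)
The plan is to reduce the statement to the classical Gaussian mechanism of differential privacy and then invoke \cref{pro:relationship}. Both processes in \cref{def:certified unlearning} define Gaussian measures centered at fixed points: the unlearning output $\mathcal{U}(\bm{w}^*,\mathcal{D}_u,\mathcal{D})=\tilde{\bm{w}}+Y$ is distributed as $\mathcal{N}(\tilde{\bm{w}},\sigma^2\bm{I})$, while the randomized learning process on the retained set, $\mathcal{A}(\mathcal{D}_r)=\tilde{\bm{w}}^*+Y$, is distributed as $\mathcal{N}(\tilde{\bm{w}}^*,\sigma^2\bm{I})$. Verifying certified unlearning therefore amounts to showing these two Gaussians are $(\varepsilon,\delta)$-indistinguishable, where the role of the $\ell_2$-sensitivity is played by $\|\tilde{\bm{w}}-\tilde{\bm{w}}^*\|_2\le\Delta$.

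First I would introduce the privacy loss random variable $L(z)=\ln\frac{p_{\tilde{\bm{w}}}(z)}{p_{\tilde{\bm{w}}^*}(z)}$, where $p_\mu$ denotes the density of $\mathcal{N}(\mu,\sigma^2\bm{I})$. After translating so that $\tilde{\bm{w}}^*$ is the origin and writing $v=\tilde{\bm{w}}-\tilde{\bm{w}}^*$ with $\|v\|_2\le\Delta$, a direct expansion of the quadratic exponents gives $L(z)=\frac{1}{2\sigma^2}\bigl(2\langle z,v\rangle-\|v\|_2^2\bigr)$. When $z$ is drawn from the numerator distribution $\mathcal{N}(v,\sigma^2\bm{I})$, the inner product $\langle z,v\rangle$ is Gaussian, so $L$ is itself a Gaussian random variable with mean $\|v\|_2^2/(2\sigma^2)$ and variance $\|v\|_2^2/\sigma^2$.

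Next I would bound the tail $\mathrm{Pr}[L>\varepsilon]$ by a standard Gaussian tail estimate and verify that the prescribed choice $\sigma\ge\frac{\Delta}{\varepsilon}\sqrt{2\ln(1.25/\delta)}$ forces this probability to be at most $\delta$; this is where the specific constant $1.25$ enters. Controlling the privacy loss yields the first inequality of \cref{def:certified unlearning} via the usual argument that partitions the output space according to whether $L$ exceeds $\varepsilon$. The symmetric second inequality follows identically after swapping the roles of $\tilde{\bm{w}}$ and $\tilde{\bm{w}}^*$, since the isotropic covariance $\sigma^2\bm{I}$ makes the two directions interchangeable.

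The main obstacle, assuming I do not simply cite the Gaussian mechanism as a black box, is the tail bound that pins down the exact constant: showing that the stated $\sigma$ makes $\mathrm{Pr}[|L|>\varepsilon]\le\delta$ requires the careful Gaussian integral estimate from the Dwork--Roth analysis, including the regime condition $\varepsilon\le 1$ under which the constant $\sqrt{2\ln(1.25/\delta)}$ is valid. Everything else is bookkeeping: the reduction through \cref{pro:relationship} and the symmetry between the two indistinguishability inequalities are immediate once the two centers are identified as differing by at most $\Delta$ in $\ell_2$ norm.
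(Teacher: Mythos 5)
Your proposal is correct and follows essentially the same route as the paper: identify $\|\tilde{\bm{w}}-\tilde{\bm{w}}^*\|_2\le\Delta$ as the $\ell_2$-sensitivity, apply the Gaussian mechanism to conclude $(\varepsilon,\delta)$-indistinguishability of the two output distributions, and pass to certified unlearning via \cref{pro:relationship}. The only difference is that the paper cites the Gaussian mechanism (Theorem A.1 of Dwork--Roth) as a black box in \cref{lmm:differential privacy}, whereas you re-derive it via the privacy loss random variable; your derivation, including the mean $\|v\|_2^2/(2\sigma^2)$, variance $\|v\|_2^2/\sigma^2$, and the $\varepsilon\le 1$ caveat (which the paper also acknowledges in its remarks), is accurate.
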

The proof of \cref{thm:certified unlearning} is provided in \cref{sec:proof}.
With \cref{thm:certified unlearning}, we divide the problem of certified unlearning into two parts, \textit{estimating the retrained model} and \textit{adding noise corresponding to the approximation error}.
We illustrate the framework of certified unlearning in \cref{fig:diagram}.
% Following \cref{thm:certified unlearning}, we can easily obtain a certified unlearning process $\mathcal{U}$ given $\mathcal{F}$.
Next, our primary goal is to find the approximation function $\mathcal{F}$ where the output $\tilde{\bm{w}}=\mathcal{F}(\bm{w}^*,\mathcal{D}_u,\mathcal{D})$ estimates the minimizer $\Tilde{\bm{w}}^*$ with a \textit{bounded approximation error}.

\noindent\textbf{\textit{Additional Remarks on Privacy and Unlearning.}}
The deduction of our certification budgets stems from results in differential privacy~\citep{dwork2014algorithmic}.
According to recent literature~\citep{balle2018improving} in differential privacy, we notice that our certification budgets can be further tightened, i.e., we can achieve $\varepsilon-\delta$ certified unlearning by adding a noise with a smaller variance $\sigma$.
In particular, we can find a smaller $\sigma$ by satisfying 
$\Phi(\frac{\Delta}{2\sigma}-\frac{\varepsilon\sigma}{\Delta})-e^\varepsilon\Phi(-\frac{\Delta}{2\sigma}-\frac{\varepsilon\sigma}{\Delta})\leq\delta$,
% \begin{equation}
% \small
% \Phi\left(\frac{\Delta}{2\sigma}-\frac{\varepsilon\sigma}{\Delta}\right)-e^\varepsilon\Phi\left(-\frac{\Delta}{2\sigma}-\frac{\varepsilon\sigma}{\Delta}\right)\leq\delta,
% \end{equation}
where $\Phi$ is the cumulative distribution function of the standard Gaussian. 
For simplicity, we still use the results in \cref{thm:certified unlearning} in the following discussion.

In addition, in the classical differential privacy~\citep{dwork2014algorithmic}, the value of privacy level $\varepsilon$ is less than $1$.
However, this condition is sometimes difficult to satisfy in certified unlearning for DNNs. 
Luckily, according to the Theorem 4 in~\citep{balle2018improving}, the certification of unlearning can be preserved when $\varepsilon\geq1$ by increasing the variance of the noise $\sigma$ (scaled by a factor $O(\sqrt{\varepsilon})$).

\begin{figure}[t]
\centering
\includegraphics[width=\linewidth]{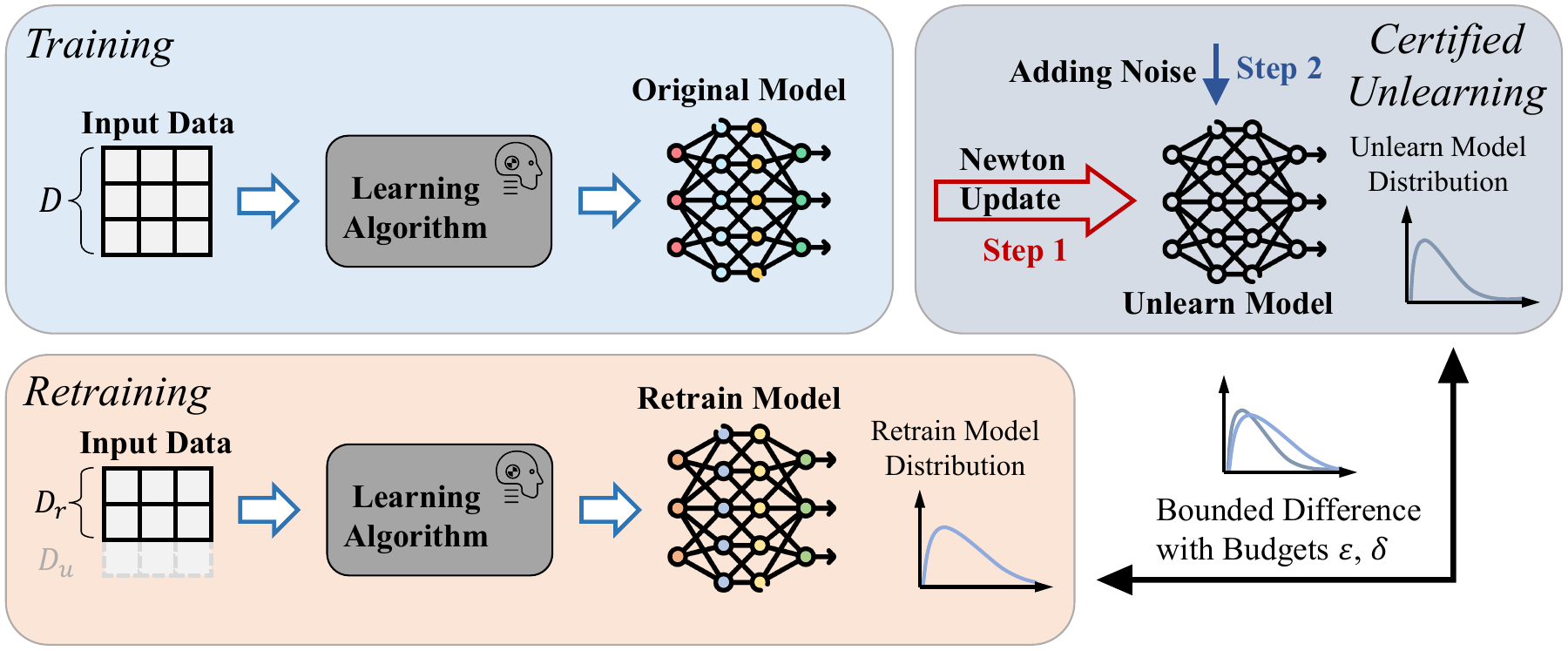}
\vspace{-5mm}
\caption{\footnotesize Illustration of certified unlearning, where the first step is to estimate the retrained model based on the original model, and the second step is to add noise to it. According to \cref{thm:certified unlearning}, we can guarantee the difference in distributions between the unlearned model and the retrained model is bounded by certification budgets.}
\label{fig:diagram}
% \vspace{-3mm}
\end{figure}

\section{Methodology}
\noindent\textbf{\textit{Existing Studies.}}
In existing studies~\citep{guo2020certified,golatkar2020eternal,golatkar2021mixed,mehta2022deep}, a single-step Newton update is widely used to estimate the empirical minimizer $\Tilde{\bm{w}}^*$ and then add noise to satisfy unlearning.
% In this section, we will introduce the way of estimating the empirical minimizer $\Tilde{\bm{w}}^*$ with a computationally efficient function $\mathcal{F}$.
Assuming the second-order derivative of $\mathcal{L}$ with respect to $\bm{w}$ exists and is continuous, one can take the Taylor expansion of $\nabla\mathcal{L}$ at $\bm{w}^*$ as
\begin{equation}\label{eq:Taylor expansion}
\nabla\mathcal{L}(\Tilde{\bm{w}}^*,\mathcal{D}_r)\approx\nabla\mathcal{L}(\bm{w}^*,\mathcal{D}_r)+\bm{H}_{\bm{w}^*}(\Tilde{\bm{w}}^*-\bm{w}^*),
\end{equation}
where $\bm{H}_{\bm{w}^*}=\nabla^2\mathcal{L}(\bm{w}^*,\mathcal{D}_r)$ denotes the Hessian of $\mathcal{L}$ \textit{over} $\mathcal{D}_r$ at $\bm{w}^*$.
We also have $\nabla\mathcal{L}(\Tilde{\bm{w}}^*,\mathcal{D}_r)=\bm{0}$ since $\Tilde{\bm{w}}^*$ is the minimizer.
Move $\Tilde{\bm{w}}^*$ term to the left-hand side and we have
\begin{equation}\label{eq:influence function}
\Tilde{\bm{w}}^*\approx\tilde{\bm{w}}=\bm{w}^*-\bm{H}_{\bm{w}^*}^{-1}\nabla\mathcal{L}(\bm{w}^*,\mathcal{D}_r).
\end{equation}
In \cref{eq:influence function}, we consider the right-hand side as an approximation of $\Tilde{\bm{w}}^*$ and use $\tilde{\bm{w}}$ to denote the approximation value.
% \cref{eq:influence function} shows us a concise way to estimate $\Tilde{\bm{w}}^*$ given $\bm{w}^*$ and $\mathcal{D}_u$, which can be seen as a single Newton step.
% Existing works~\citep{golatkar2020eternal,golatkar2021mixed,guo2020certified,chien2023efficient} exploit the \textit{influence function}~\citep{koh2017understanding} to obtain the same way as \cref{eq:influence function} for unlearning.
\cref{eq:influence function} cannot directly serve as the function $\mathcal{F}$ unless it satisfies two requirements simultaneously: \textbf{(1). bounded approximation error}; and \textbf{(2). computational efficiency}, which are extremely challenging due to the nature of DNNs.
However, we found that we can make the single-step Newton update satisfy these requirements by making simple revisions to \cref{eq:influence function}.
Next, we provide a detailed analysis.
% Next, we discuss these two requirements in detail and find a proper $\mathcal{F}$ based on \cref{eq:influence function} that satisfies both requirements.

\vspace{1.5mm}
\noindent\textbf{\textit{Approximation Error.}}
In \cref{eq:Taylor expansion}, we truncate the Taylor series and only consider the first-order term, hence \cref{eq:influence function} has an error in estimating the value of $\Tilde{\bm{w}}^*$.
Prior to further discussion, we first make the following assumptions.
\begin{assumption}\label{asp:continuity1}
The loss function $l(\bm{w},\bm{x})$ has an $L$-Lipschitz gradient in terms of $\bm{w}$.
\end{assumption}
\begin{assumption}\label{asp:continuity2}
The loss function $l(\bm{w},\bm{x})$ has an $M$-Lipschitz Hessian in terms of $\bm{w}$.
\end{assumption}
\Cref{asp:continuity1} and \Cref{asp:continuity2} are widely used in existing works~\citep{guo2020certified,sekhari2021remember,chien2023efficient,wu2023gif,wu2023certified}. 
In contrast to most existing works, our assumption does not necessitate the convexity of the objective, enabling our discussion to be applicable to DNNs in practice.
The following proposition provides a preliminary result on the approximation error bound.
\begin{lemma}\label{lmm:error bound}
Let $\bm{w}^*=\mathrm{argmin}_{\bm{w}\in\mathcal{H}}\mathcal{L}(\bm{w},\mathcal{D})$ and $\tilde{\bm{w}}^*=\mathrm{argmin}_{\bm{w}\in\mathcal{H}}\mathcal{L}(\bm{w},\mathcal{D}_r)$.
Let $\tilde{\bm{w}}=\bm{w}^*-\bm{H}_{\bm{w}^*}^{-1}\nabla\mathcal{L}(\bm{w}^*,\mathcal{D}_r)$ be an approximation of $\Tilde{\bm{w}}^*$. 
Consider \cref{asp:continuity2}, we have
\begin{equation}\label{eq:preliminary upper bound}
\|\tilde{\bm{w}}-\tilde{\bm{w}}^*\|_2\leq\frac{M}{2}\|\bm{H}_{\bm{w}^*}^{-1}\|_2\cdot\|\bm{w}^*-\tilde{\bm{w}}^*\|_2^2.
\end{equation}
\end{lemma}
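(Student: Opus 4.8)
The plan is to control the single Newton-step error by comparing the gradient $\nabla\mathcal{L}(\bm{w}^*,\mathcal{D}_r)$ against the Hessian $\bm{H}_{\bm{w}^*}$ that is used to invert it, exploiting that the Hessian varies only slowly (Lipschitz) along the segment joining $\bm{w}^*$ and $\tilde{\bm{w}}^*$. First I would subtract $\tilde{\bm{w}}^*$ from the definition of $\tilde{\bm{w}}$ to obtain
\[
\tilde{\bm{w}}-\tilde{\bm{w}}^*=(\bm{w}^*-\tilde{\bm{w}}^*)-\bm{H}_{\bm{w}^*}^{-1}\nabla\mathcal{L}(\bm{w}^*,\mathcal{D}_r),
\]
and then use the first-order optimality condition $\nabla\mathcal{L}(\tilde{\bm{w}}^*,\mathcal{D}_r)=\bm{0}$ together with the fundamental theorem of calculus applied to the map $t\mapsto\nabla\mathcal{L}(\tilde{\bm{w}}^*+t(\bm{w}^*-\tilde{\bm{w}}^*),\mathcal{D}_r)$ to express the gradient exactly as
\[
\nabla\mathcal{L}(\bm{w}^*,\mathcal{D}_r)=\Big(\int_0^1\bm{H}_t\,dt\Big)(\bm{w}^*-\tilde{\bm{w}}^*),\quad \bm{H}_t:=\nabla^2\mathcal{L}(\tilde{\bm{w}}^*+t(\bm{w}^*-\tilde{\bm{w}}^*),\mathcal{D}_r).
\]

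Next, I would factor the inverse Hessian out front. Writing $\bm{w}^*-\tilde{\bm{w}}^*=\bm{H}_{\bm{w}^*}^{-1}\bm{H}_{\bm{w}^*}(\bm{w}^*-\tilde{\bm{w}}^*)$ and observing that $\bm{H}_{\bm{w}^*}=\bm{H}_1$ is precisely the endpoint Hessian, the two terms combine into
\[
\tilde{\bm{w}}-\tilde{\bm{w}}^*=\bm{H}_{\bm{w}^*}^{-1}\Big(\int_0^1(\bm{H}_1-\bm{H}_t)\,dt\Big)(\bm{w}^*-\tilde{\bm{w}}^*),
\]
so the entire error is carried by the discrepancy between the Hessian at $\bm{w}^*$ and its values at the interpolated points. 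Taking operator norms and applying submultiplicativity, it then remains only to bound $\big\|\int_0^1(\bm{H}_1-\bm{H}_t)\,dt\big\|_2$.

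To finish, I would invoke \Cref{asp:continuity2}. Since $\mathcal{L}(\cdot,\mathcal{D}_r)$ is a finite average of the per-sample losses $l(\cdot,\bm{x})$, its Hessian is an average of $M$-Lipschitz Hessians and is therefore itself $M$-Lipschitz; hence $\|\bm{H}_1-\bm{H}_t\|_2\le M\|(1-t)(\bm{w}^*-\tilde{\bm{w}}^*)\|_2=M(1-t)\|\bm{w}^*-\tilde{\bm{w}}^*\|_2$. Integrating via $\int_0^1 M(1-t)\,dt=\tfrac{M}{2}$ and collecting the norm factors yields
\[
\|\tilde{\bm{w}}-\tilde{\bm{w}}^*\|_2\le\|\bm{H}_{\bm{w}^*}^{-1}\|_2\cdot\tfrac{M}{2}\|\bm{w}^*-\tilde{\bm{w}}^*\|_2\cdot\|\bm{w}^*-\tilde{\bm{w}}^*\|_2,
\]
which is exactly the claimed bound.

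I expect the only delicate points to be bookkeeping rather than genuine difficulty: justifying the integral (mean-value) form of the gradient, which relies on the continuous second derivative already assumed when the Taylor expansion in \cref{eq:Taylor expansion} was introduced, and transferring the $M$-Lipschitz Hessian property from the individual loss $l$ to the aggregate $\mathcal{L}(\cdot,\mathcal{D}_r)$, which is immediate for a convex combination. Notably, convexity of $\mathcal{L}$ is never used anywhere: the argument needs only invertibility of $\bm{H}_{\bm{w}^*}$, so that $\tilde{\bm{w}}$ is well defined, and the Lipschitz Hessian bound, which is precisely what makes the estimate applicable to the nonconvex DNN objectives targeted in this paper.
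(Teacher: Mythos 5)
Your proof is correct and follows essentially the same route as the paper's: express the Newton-step error via the fundamental theorem of calculus applied to the gradient along the segment between $\bm{w}^*$ and $\tilde{\bm{w}}^*$, rewrite it as $\bm{H}_{\bm{w}^*}^{-1}\int_0^1(\bm{H}_{\bm{w}^*}-\bm{H}_t)(\bm{w}^*-\tilde{\bm{w}}^*)\,dt$, and bound the integrand with the $M$-Lipschitz Hessian to pick up the factor $\tfrac{M}{2}$. The only difference is the direction in which you parameterize the segment, which is immaterial.
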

The proof of \cref{lmm:error bound} can be found in \cref{sec:proof}. From \cref{lmm:error bound}, we can find that the approximation error bound is determined by two factors: $\|\bm{H}_{\bm{w}^*}^{-1}\|_2$ and $\|\bm{w}^*-\tilde{\bm{w}}^*\|_2^2$. 
Next, we aim to bound these two factors separately.

\noindent (1). The norm of the inverse Hessian $\|\bm{H}_{\bm{w}^*}^{-1}\|_2$: 
For nonconvex objective $\mathcal{L}(\bm{w},\mathcal{D}_r)$, $\|\bm{H}_{\bm{w}^*}^{-1}\|_2$ can be arbitrarily large.
To bound the value of $\|\bm{H}_{\bm{w}^*}^{-1}\|_2$, we exploit the \textbf{local convex approximation} technique~\citep{NoceWrig06}.
Specifically, we add an $\ell$-2 regularization term to the objective as $\mathcal{L}(\bm{w},\mathcal{D}_r)+\frac{\lambda}{2}\|\bm{w}\|_2^2$ {when computing the Hessian}.
Then, we have $\tilde{\bm{w}}=\bm{w}^*-(\bm{H}_{\bm{w}^*}+\lambda\bm{I})^{-1}\nabla\mathcal{L}(\bm{w}^*,\mathcal{D}_r)$ and the norm of the inverse Hessian changes to $\|(\bm{H}_{\bm{w}^*}+\lambda\bm{I})^{-1}\|_2$, correspondingly.
Intuitively, we use local convex approximation to make the nonconvex objective ($\lambda_{min}<0$) strongly convex with parameter $\lambda+\lambda_{min}$ at $\bm{w}^*$.

\noindent (2). The squared $\ell$-2 distance between $\bm{w}^*$ and $\tilde{\bm{w}}^*$: $\|\bm{w}^*-\tilde{\bm{w}}^*\|_2^2$: 
We find that $\|\bm{w}^*-\tilde{\bm{w}}^*\|_2^2$ is completely determined by the optimization problem in the learning process.
To bound the value of $\|\bm{w}^*-\tilde{\bm{w}}^*\|_2^2$, we modify the optimization problem by \textbf{adding a constraint $\|\bm{w}\|_2\leq C$ in the learning process}.
Then we have $\bm{w}^*=\mathrm{argmin}_{\|\bm{w}\|_2\leq C}\mathcal{L}(\bm{w},\mathcal{D})$ and $\tilde{\bm{w}}^*=\mathrm{argmin}_{\|\bm{w}\|_2\leq C}\mathcal{L}(\bm{w},\mathcal{D}_r)$, correspondingly.
It is worth noting that previous studies~\citep{guo2020certified,sekhari2021remember} assume the objective to be Lipschitz continuous and strongly convex simultaneously.
However, these assumptions are contradictory unless the norm of model parameters is bounded (see \cref{sec:assumption}).
% In this sense, the constraint $\|\bm{w}\|_2\leq C$ successfully keeps our certified unlearning from contradictory assumptions.
In this paper, we directly achieve the important requirements of bounded model parameters which is implicitly required in previous works.
We resort to \textit{projected gradient descent}~\citep{bertsekas1997nonlinear} to solve this constrained optimization problem in the learning process.
By considering this constraint, we can obtain a worst-case upper bound of the term $\|\bm{w}^*-\tilde{\bm{w}}^*\|_2^2$ which is \textit{independent of the size of the unlearned set}.

By leveraging the local convex approximation and the constraint on the norm of the parameters, we can further derive a tractable approximation error bound.
\begin{theorem}\label{thm:upper bound}
Let $\bm{w}^*=\mathrm{argmin}_{\|\bm{w}\|_2\leq C}\mathcal{L}(\bm{w},\mathcal{D})$ and $\tilde{\bm{w}}^*=\mathrm{argmin}_{\|\bm{w}\|_2\leq C}\mathcal{L}(\bm{w},\mathcal{D}_r)$.
Denote $\lambda_{min}$ as the smallest eigenvalue of $\bm{H}_{\bm{w}^*}$.
Let $\tilde{\bm{w}}=\bm{w}^*-(\bm{H}_{\bm{w}^*}+\lambda\bm{I})^{-1}\nabla\mathcal{L}(\bm{w}^*,\mathcal{D}_r)$ be an approximation of $\Tilde{\bm{w}}^*$, where $\lambda>\|\bm{H}_{\bm{w}^*}\|_2$. 
Consider \cref{asp:continuity2}, we have
\begin{equation}\label{eq:upper bound}
\|\tilde{\bm{w}}-\tilde{\bm{w}}^*\|_2\leq\frac{2C(MC+\lambda)}{\lambda+\lambda_{min}}.
\end{equation}
\end{theorem}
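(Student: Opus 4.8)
The plan is to work directly from a first-order expansion of $\nabla\mathcal{L}(\cdot,\mathcal{D}_r)$ rather than invoking \cref{lmm:error bound} verbatim, because here only the Hessian is regularized (the gradient $\nabla\mathcal{L}(\bm{w}^*,\mathcal{D}_r)$ is \emph{not} modified), so the exact-Newton identity behind that lemma no longer applies and a bias term must be tracked. First I would set $\bm{\delta}=\tilde{\bm{w}}^*-\bm{w}^*$ and expand the gradient of $\mathcal{L}(\cdot,\mathcal{D}_r)$ about $\bm{w}^*$ with an integral remainder, evaluated at $\tilde{\bm{w}}^*$:
\[
\nabla\mathcal{L}(\tilde{\bm{w}}^*,\mathcal{D}_r)=\nabla\mathcal{L}(\bm{w}^*,\mathcal{D}_r)+\bm{H}_{\bm{w}^*}\bm{\delta}+\bm{r},\quad \bm{r}=\int_0^1\left(\nabla^2\mathcal{L}(\bm{w}^*+t\bm{\delta},\mathcal{D}_r)-\bm{H}_{\bm{w}^*}\right)\bm{\delta}\,dt.
\]
Since $\tilde{\bm{w}}^*$ is a minimizer the left-hand side vanishes, so $\nabla\mathcal{L}(\bm{w}^*,\mathcal{D}_r)=-\bm{H}_{\bm{w}^*}\bm{\delta}-\bm{r}$.

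Substituting this into $\tilde{\bm{w}}=\bm{w}^*-(\bm{H}_{\bm{w}^*}+\lambda\bm{I})^{-1}\nabla\mathcal{L}(\bm{w}^*,\mathcal{D}_r)$ and subtracting $\tilde{\bm{w}}^*=\bm{w}^*+\bm{\delta}$, the key algebraic step is to write $\bm{\delta}=(\bm{H}_{\bm{w}^*}+\lambda\bm{I})^{-1}(\bm{H}_{\bm{w}^*}+\lambda\bm{I})\bm{\delta}$ so the $\bm{H}_{\bm{w}^*}\bm{\delta}$ contributions cancel, leaving
\[
\tilde{\bm{w}}-\tilde{\bm{w}}^*=(\bm{H}_{\bm{w}^*}+\lambda\bm{I})^{-1}(\bm{r}-\lambda\bm{\delta}).
\]
The $-\lambda\bm{\delta}$ summand is precisely the bias created by regularizing the Hessian alone, and it is what produces the extra additive $\lambda$ inside the numerator of the claimed bound.

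Then I would bound the three ingredients in turn. Because $\lambda>\|\bm{H}_{\bm{w}^*}\|_2\geq-\lambda_{min}$, the matrix $\bm{H}_{\bm{w}^*}+\lambda\bm{I}$ is positive definite with smallest eigenvalue $\lambda+\lambda_{min}>0$, hence $\|(\bm{H}_{\bm{w}^*}+\lambda\bm{I})^{-1}\|_2=1/(\lambda+\lambda_{min})$. Pulling the norm inside the integral and applying \cref{asp:continuity2} gives $\|\bm{r}\|_2\leq\frac{M}{2}\|\bm{\delta}\|_2^2$. The constraint $\|\bm{w}\|_2\leq C$ obeyed by both $\bm{w}^*$ and $\tilde{\bm{w}}^*$ yields $\|\bm{\delta}\|_2\leq\|\bm{w}^*\|_2+\|\tilde{\bm{w}}^*\|_2\leq 2C$ by the triangle inequality. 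Combining the three bounds, $\|\tilde{\bm{w}}-\tilde{\bm{w}}^*\|_2\leq\frac{1}{\lambda+\lambda_{min}}\!\left(\frac{M}{2}\|\bm{\delta}\|_2^2+\lambda\|\bm{\delta}\|_2\right)$, which is increasing in $\|\bm{\delta}\|_2$ on the admissible range, so substituting the worst case $\|\bm{\delta}\|_2\leq 2C$ is legitimate and produces $\frac{2MC^2+2\lambda C}{\lambda+\lambda_{min}}=\frac{2C(MC+\lambda)}{\lambda+\lambda_{min}}$, exactly \cref{eq:upper bound}.

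The main obstacle is twofold. First, the clean cancellation hinges on the stationarity $\nabla\mathcal{L}(\tilde{\bm{w}}^*,\mathcal{D}_r)=\bm{0}$; since $\tilde{\bm{w}}^*$ minimizes over the ball $\|\bm{w}\|_2\leq C$, this is exact only for an interior minimizer, and a boundary optimum would leave a surviving KKT-multiplier term, so I would either assume interiority or argue that term is negligible. Second, and more delicate conceptually, one must resist reusing \cref{lmm:error bound} as a black box: the unregularized gradient means the bias $-\lambda\bm{\delta}$ has no counterpart there, and correctly isolating it as the origin of the $\lambda$ in the numerator — rather than only the squared-distance term — is the crux of the argument.
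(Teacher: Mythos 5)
Your proof is correct and follows essentially the same route as the paper's: the paper likewise redoes the integral-remainder expansion with the regularized inverse, arriving at the identity $\tilde{\bm{w}}-\tilde{\bm{w}}^*=(\bm{H}_{\bm{w}^*}+\lambda\bm{I})^{-1}\int_0^1\bigl(\bm{H}_{\bm{w}^*}+\lambda\bm{I}-\bm{H}_{\bm{w}^*+t(\tilde{\bm{w}}^*-\bm{w}^*)}\bigr)(\bm{w}^*-\tilde{\bm{w}}^*)\,dt$, which is exactly your $(\bm{H}_{\bm{w}^*}+\lambda\bm{I})^{-1}(\bm{r}-\lambda\bm{\delta})$, and then applies the same three bounds $\|(\bm{H}_{\bm{w}^*}+\lambda\bm{I})^{-1}\|_2=1/(\lambda+\lambda_{min})$, $\|\bm{r}\|_2\leq\frac{M}{2}\|\bm{\delta}\|_2^2$, and $\|\bm{\delta}\|_2\leq 2C$. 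Your caveat about a surviving KKT term when $\tilde{\bm{w}}^*$ lies on the boundary of the ball is well taken, but the paper's proof makes the same implicit interiority assumption in using $\nabla\mathcal{L}(\tilde{\bm{w}}^*,\mathcal{D}_r)=\bm{0}$, so this is not a divergence between the two arguments.
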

The proof of \cref{thm:upper bound} is provided in \cref{sec:proof}.
\cref{thm:upper bound} provides a basic result for the approximation error bound.
In contrast to previous studies, our result does not require the objective to be convex.
We compare the approximation error bounds with and without the convexity assumption in \cref{sec:comparison} to show our progress.

\vspace{1.5mm}
\noindent\textbf{\textit{Computational Efficiency.}}
The main advantage of certified unlearning compared with retraining from scratch is computational efficiency.
Noting the importance of efficiency for certified unlearning, we aim to reduce the computational cost of $\tilde{\bm{w}}=\mathcal{F}(\bm{w}^*,\mathcal{D}_u,\mathcal{D})$ \textit{while keeping the approximation error bound}.
The computation cost contains two parts: the inverse Hessian $(\bm{H}_{\bm{w}^*}+\lambda\bm{I})^{-1}$ (with local convex approximation) and the gradient $\nabla\mathcal{L}(\bm{w}^*,\mathcal{D}_r)$.
First, assuming the number of learnable parameters is $p$, the computation of the inverse Hessian requires a complexity of $O(np^2+p^3)$.
Since the scale of the DNNs can be extremely large, we adopt a computationally efficient algorithm, LiSSA~\citep{agarwal2017second}, to estimate the inverse Hessian.
In particular, we let $X\in\mathcal{D}_r$ be a random variable of the retained training samples. 
Given $s$ independent and identically distributed (i.i.d.) retained training samples $\{X_1,\dots,X_s\}$, we can obtain $s$ i.i.d. samples $\{\bm{H}_{1,\lambda},\dots,\bm{H}_{s,\lambda}\}$ of the Hessian matrix $\bm{H}_{\bm{w}^*}+\lambda\bm{I}$, where $\bm{H}_{i,\lambda}=\nabla^2\mathcal{L}(\bm{w}^*,X_i)+\lambda\bm{I}$. 
Then, we can construct an estimator of the inverse Hessian. 
\begin{proposition}\label{pro:Hessian estimator}
Given $s$ i.i.d. samples $\{\bm{H}_{1,\lambda},\dots,\bm{H}_{s,\lambda}\}$ of the Hessian matrix $\bm{H}_{\bm{w}^*}+\lambda\bm{I}$, we let 
\begin{equation}\label{eq:Hessian estimator}
\small
\tilde{\bm{H}}^{-1}_{t,\lambda}=\bm{I}+\left(\bm{I}-\frac{\bm{H}_{t,\lambda}}{H}\right)\tilde{\bm{H}}^{-1}_{t-1,\lambda},\ \tilde{\bm{H}}^{-1}_{0,\lambda}=\bm{I},
\end{equation}
where $\|\nabla^2l(\bm{w}^*,x)+\lambda\bm{I}\|\leq H$, $\forall x\in\mathcal{D}_r$.
Then, we have that $\frac{\tilde{\bm{H}}^{-1}_{s,\lambda}}{H}$ is an asymptotic unbiased estimator of the inverse Hessian $(\bm{H}_{\bm{w}^*}+\lambda\bm{I})^{-1}$.
\end{proposition}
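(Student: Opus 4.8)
The plan is to show that the expectation of the estimator obeys a deterministic Neumann-series recursion whose limit is exactly the matrix inverse in question, so that ``asymptotic unbiasedness'' means $\lim_{s\to\infty}\mathbb{E}[\tfrac{1}{H}\tilde{\bm{H}}^{-1}_{s,\lambda}]=(\bm{H}_{\bm{w}^*}+\lambda\bm{I})^{-1}$. Write $\bm{A}=\bm{H}_{\bm{w}^*}+\lambda\bm{I}$ for the target matrix. Since $X_1,\dots,X_s$ are i.i.d.\ retained training samples, each draw satisfies $\mathbb{E}[\bm{H}_{i,\lambda}]=\mathbb{E}[\nabla^2\mathcal{L}(\bm{w}^*,X_i)]+\lambda\bm{I}=\nabla^2\mathcal{L}(\bm{w}^*,\mathcal{D}_r)+\lambda\bm{I}=\bm{A}$, so every $\bm{H}_{i,\lambda}$ is an unbiased single-sample surrogate for $\bm{A}$.

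First I would take expectations on both sides of the recursion \cref{eq:Hessian estimator}. The crucial observation is that $\tilde{\bm{H}}^{-1}_{t-1,\lambda}$ is a measurable function of $\bm{H}_{1,\lambda},\dots,\bm{H}_{t-1,\lambda}$ only, hence independent of $\bm{H}_{t,\lambda}$; this independence lets the expectation factor through the product term,
\[
\mathbb{E}[\tilde{\bm{H}}^{-1}_{t,\lambda}]=\bm{I}+\left(\bm{I}-\frac{\mathbb{E}[\bm{H}_{t,\lambda}]}{H}\right)\mathbb{E}[\tilde{\bm{H}}^{-1}_{t-1,\lambda}]=\bm{I}+\bm{B}\,\mathbb{E}[\tilde{\bm{H}}^{-1}_{t-1,\lambda}],
\]
where $\bm{B}:=\bm{I}-\bm{A}/H$. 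Writing $\bar{\bm{S}}_t=\mathbb{E}[\tilde{\bm{H}}^{-1}_{t,\lambda}]$ with $\bar{\bm{S}}_0=\bm{I}$, a routine induction unrolls this linear recursion into the partial Neumann sum $\bar{\bm{S}}_s=\sum_{i=0}^{s}\bm{B}^{i}$.

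Next I would verify that the spectral radius of $\bm{B}$ is strictly below one, which is exactly the condition for the Neumann series to converge. By hypothesis $\|\nabla^2 l(\bm{w}^*,x)+\lambda\bm{I}\|\le H$ for all $x\in\mathcal{D}_r$, so $\|\bm{A}\|=\|\mathbb{E}[\bm{H}_{i,\lambda}]\|\le H$, i.e.\ every eigenvalue of $\bm{A}$ is at most $H$. Moreover, because $\lambda>\|\bm{H}_{\bm{w}^*}\|_2$ as in \cref{thm:upper bound}, $\bm{A}$ is positive definite with smallest eigenvalue $\lambda+\lambda_{min}>0$. Consequently every eigenvalue of $\bm{B}=\bm{I}-\bm{A}/H$ lies in $[\,0,\,1-(\lambda+\lambda_{min})/H\,)\subset[0,1)$, its spectral radius is below one, and $\sum_{i=0}^{\infty}\bm{B}^{i}=(\bm{I}-\bm{B})^{-1}=(\bm{A}/H)^{-1}=H\bm{A}^{-1}$.

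Finally I would let $s\to\infty$, noting that for each fixed $s$ the expectation $\bar{\bm{S}}_s$ is computed exactly, so no interchange of limit and expectation is required: $\lim_{s\to\infty}\mathbb{E}\!\left[\tfrac{1}{H}\tilde{\bm{H}}^{-1}_{s,\lambda}\right]=\tfrac{1}{H}\sum_{i=0}^{\infty}\bm{B}^{i}=\bm{A}^{-1}=(\bm{H}_{\bm{w}^*}+\lambda\bm{I})^{-1}$, which is precisely the asymptotic unbiasedness claimed. The main obstacle I anticipate is not the algebra but ensuring the two conditions on $\bm{B}$ hold simultaneously: the upper eigenvalue bound needs the per-sample operator bound $H$, while the \emph{strict} inequality in the spectral radius needs the positive definiteness of $\bm{A}$ supplied by the local convex approximation with $\lambda>\|\bm{H}_{\bm{w}^*}\|_2$. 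Tying the convergence of the series back to the regularizer $\lambda$ is therefore the delicate point, since without it $\bm{B}$ could have an eigenvalue equal to one and the estimator would fail to converge.
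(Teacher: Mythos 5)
Your proposal is correct and shares the paper's key step: taking expectations of the recursion and using the i.i.d.\ (hence independence of $\bm{H}_{t,\lambda}$ from $\tilde{\bm{H}}^{-1}_{t-1,\lambda}$) structure to factor $\mathbb{E}[\bm{H}_{t,\lambda}\tilde{\bm{H}}^{-1}_{t-1,\lambda}]$ into $\bm{A}\,\mathbb{E}[\tilde{\bm{H}}^{-1}_{t-1,\lambda}]$ with $\bm{A}=\bm{H}_{\bm{w}^*}+\lambda\bm{I}$. Where you diverge is in how the limit is established. The paper asserts that $\mathbb{E}[\tilde{\bm{H}}^{-1}_{\infty,\lambda}]$ exists on the grounds that $\|\mathbb{E}[\bm{H}_{t,\lambda}]/H\|\le 1$, then passes to the limit in the recursion and solves the resulting fixed-point equation $\frac{1}{H}\bm{A}\,\mathbb{E}[\tilde{\bm{H}}^{-1}_{\infty,\lambda}]=\bm{I}$. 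You instead unroll the linear recursion into the partial Neumann sum $\sum_{i=0}^{s}\bm{B}^i$ with $\bm{B}=\bm{I}-\bm{A}/H$ and prove convergence directly by bounding the spectral radius of $\bm{B}$ strictly below one, using both the per-sample operator bound $H$ and the positive definiteness of $\bm{A}$ supplied by $\lambda>\|\bm{H}_{\bm{w}^*}\|_2$. Your route is the more rigorous of the two: the paper's stated condition $\|\mathbb{E}[\bm{H}_{t,\lambda}]/H\|\le 1$ is not by itself sufficient for the limit to exist (an eigenvalue of $\bm{A}$ equal to zero would put an eigenvalue of $\bm{B}$ at exactly one and the iteration would diverge), so the fixed-point argument silently presupposes exactly the strict spectral gap you make explicit. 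What the paper's version buys is brevity; what yours buys is an honest accounting of why the local convex approximation is load-bearing for this proposition and not merely for the error bounds elsewhere.
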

The proof of \cref{pro:Hessian estimator} can be found in \cref{sec:proof}.
By incorporating the Hessian-vector product technique, we can compute $\tilde{\bm{H}}^{-1}_{s,\lambda}\nabla\mathcal{L}(\bm{w}^*,\bm{D}_r)$ with a complexity of $O(sp^2)$ which is independent of $n$.
Second, to compute the gradient $\nabla\mathcal{L}(\bm{w}^*,\mathcal{D}_r)$ efficiently, we leverage the property of the minimizer $\bm{w}^*$ that $\nabla\mathcal{L}(\bm{w}^*,\mathcal{D})=\frac{n_u}{n}\nabla\mathcal{L}(\bm{w}^*,\mathcal{D}_u)+\frac{n-n_u}{n}\nabla\mathcal{L}(\bm{w}^*,\mathcal{D}_r)=0$. 
Noting that the number of unlearned samples is usually much smaller than the number of retained samples, we can replace the gradient $\nabla\mathcal{L}(\bm{w}^*,\mathcal{D}_r)$ as $\nabla\mathcal{L}(\bm{w}^*,\mathcal{D}_r)=-\frac{n_u}{n-n_u}\nabla\mathcal{L}(\bm{w}^*,\mathcal{D}_u)$ and reduce the complexity from $O(np)$ to $O(n_up)$ where $n_u\ll n$. 
After exploiting the two efficient techniques, we prove the approximation error is still bounded as follows.
\begin{theorem}\label{thm:efficient upper bound}
Let $\bm{w}^*=\mathrm{argmin}_{\|\bm{w}\|_2\leq C}\mathcal{L}(\bm{w},\mathcal{D})$ and $\tilde{\bm{w}}^*=\mathrm{argmin}_{\|\bm{w}\|_2\leq C}\mathcal{L}(\bm{w},\mathcal{D}_r)$.
Let $\lambda_{min}$ be the smallest eigenvalue of $\bm{H}_{\bm{w}^*}$, and $s$ be the recursion number for the inverse Hessian approximation.
Let $\tilde{\bm{w}}=\bm{w}^*+\frac{n_u}{(n-n_u)H}\tilde{\bm{H}}^{-1}_{s,\lambda}\nabla\mathcal{L}(\bm{w}^*,\mathcal{D}_u)$ be an approximation of $\Tilde{\bm{w}}^*$, where $\lambda>\|\bm{H}_{\bm{w}^*}\|_2$. 
Consider \cref{asp:continuity1} and \cref{asp:continuity2}, when $s\geq2\frac{L+\lambda}{\lambda+\lambda_{\min}}\mathrm{ln}\frac{L+\lambda}{\lambda+\lambda_{\min}}$, the following inequality holds with a probability larger than $1-\rho$. 
\footnote{For \Cref{thm:efficient upper bound} and \Cref{pro:practical upper bound}, a correction has been made to the upper bounds, and a condition on $s$ is added to this version. 
The original bound incorrectly adopted the result from \citep{agarwal2017second} due to a misunderstanding of the notation. 
The corrected bound includes an additional multiplicative factor of $(\lambda+L)$ in the second term. This does not affect the technical contributions or conclusions of this paper.}
\begin{equation}\label{eq:efficient upper bound}
\scriptsize
\|\tilde{\bm{w}}-\tilde{\bm{w}}^*\|_2\leq\frac{2C(MC+\lambda)}{\lambda+\lambda_{min}}+\left(\frac{32\sqrt{\mathrm{ln}\,d/\rho}(\lambda+L)}{\lambda+\lambda_{min}}+\frac{1}{8}\right)LC.
\end{equation}
\end{theorem}
The proof of \cref{thm:efficient upper bound} is provided in \cref{sec:proof}.
Finally, we obtain a computationally efficient $\mathcal{F}$ with bounded approximation error as
\begin{equation}\label{eq:final approximation}
\small
\mathcal{F}(\bm{w}^*,\mathcal{D}_u,\mathcal{D})=\bm{w}^*+\frac{n_u}{(n-n_u)H}\tilde{\bm{H}}^{-1}_{s,\lambda}\nabla\mathcal{L}(\bm{w}^*,\mathcal{D}_u).
\end{equation}
Based on our discussion, the complexity of computing $\mathcal{F}$ is $O(sp^2+n_up)$.
Consequently, we can achieve certified unlearning based on $\mathcal{F}$ according to \cref{thm:certified unlearning}.
We formulate the overall algorithm for certified unlearning in \cref{alg:certified unlearning}.

% \begin{algorithm}[t]
% \caption{Sequential Certified Unlearning for DNNs}\label{alg:certified unlearning}
% \textbf{Input}: original trained model $\bm{w}^*$; certification budget $\varepsilon$ and $\delta$; local convex coefficient $\lambda$; norm upper bound $C$; number of recursion $s$; unlearned sets $\{\mathcal{D}_{u_1},\dots,\mathcal{D}_{u_k}\}$. \\
% \textbf{Output}: certified unlearning model $\bm{w}^-$. \\
% \vspace{-4mm}
% \begin{algorithmic}[1] %[1] enables line numbers
% \STATE $\tilde{\bm{w}}_0\leftarrow\bm{w}^*$, $\mathcal{D}_{r_0}\leftarrow\mathcal{D}$.
% \FOR{$i=1,\dots,k$}
% \STATE $\mathcal{D}_{r_i}\leftarrow\mathcal{D}_{r_{i-1}}\backslash\mathcal{D}_{u_i}$.
% \STATE Compute $\nabla\mathcal{L}(\tilde{\bm{w}}_{k-1},\mathcal{D}_{r_k})$ and $\{\bm{H}_{1,\lambda,i},\dots,\bm{H}_{s,\lambda,i}\}$.
% \STATE $\bm{P}_{0,\lambda,i}\leftarrow\nabla\mathcal{L}(\tilde{\bm{w}}_{k-1},\mathcal{D}_{r_k})$.
% \FOR{$j=1,\dots,s$}
% \STATE $\bm{P}_{j,\lambda,i}\leftarrow\nabla\mathcal{L}(\tilde{\bm{w}}_{k-1},\mathcal{D}_{r_k})+(\bm{I}-\bm{H}_{j,\lambda,i})\bm{P}_{j-1,\lambda,i}$.
% \ENDFOR
% \STATE $\tilde{\bm{w}}_i\leftarrow\tilde{\bm{w}}_{i-1}-\bm{P}_{s,\lambda,i}$.
% \ENDFOR
% \STATE Compute the error bound $\Delta$ based on \cref{eq:practical upper bound}.
% \STATE $\sigma\leftarrow\frac{\Delta}{\varepsilon}\sqrt{2\mathrm{ln(1.25/\delta)}}$.
% \STATE $\bm{w}^-\leftarrow\tilde{\bm{w}}_k+Y$, where $Y\sim\mathcal{N}(0,\sigma^2\bm{I})$.
% \end{algorithmic}
% \end{algorithm}

\begin{algorithm}[t]
\caption{Single-Batch Certified Unlearning for DNNs}\label{alg:certified unlearning}
\textbf{Input}: original trained model $\bm{w}^*$; certification budget $\varepsilon$ and $\delta$; local convex coefficient $\lambda$; norm upper bound $C$; Hessian norm bound $H$; number of recursion $s$; unlearned set $\mathcal{D}_u$. \\
\textbf{Output}: certified unlearning model $\bm{w}^-$. \\
\vspace{-4mm}
\begin{algorithmic}[1] %[1] enables line numbers
\STATE Compute $s$ i.i.d. Hessian samples $\{\bm{H}_{1,\lambda},\dots,\bm{H}_{s,\lambda}\}$.
\STATE $\bm{P}_{0,\lambda}\leftarrow\nabla\mathcal{L}(\bm{w}^*,\mathcal{D}_u)$.
\FOR{$j=1,\dots,s$}
\STATE $\bm{P}_{j,\lambda}\leftarrow\nabla\mathcal{L}(\bm{w}^*,\mathcal{D}_u)+(\bm{I}-\frac{\bm{H}_{j,\lambda}}{H})\bm{P}_{j-1,\lambda}$.
\ENDFOR
\STATE $\tilde{\bm{w}}\leftarrow\bm{w}^*+\frac{n_u}{(n-n_u)H}\bm{P}_{s,\lambda}$.
\STATE Compute the error bound $\Delta$ based on \cref{eq:practical upper bound}.
\STATE $\sigma\leftarrow\frac{\Delta}{\varepsilon}\sqrt{2\mathrm{ln(1.25/\delta)}}$.
\STATE $\bm{w}^-\leftarrow\tilde{\bm{w}}+Y$, where $Y\sim\mathcal{N}(0,\sigma^2\bm{I})$.
\end{algorithmic}
\end{algorithm}

\section{Practical Consideration}
% \vspace{1.5mm}
\noindent\textbf{\textit{Nonconvergence.}}
In previous results, we let $\bm{w}^*$ and $\tilde{\bm{w}}^*$ be the exact minimum of $\mathcal{L}$ over $\mathcal{D}$ and $\mathcal{D}_r$, respectively.
However, the training process of DNNs usually stops before reaching the exact minimum, e.g., early stopping, for better generalization.
Hereby, we analyze the approximation error bound under the nonconvergence condition.
In the nonconvergence case, $\bm{w}^*$ and $\tilde{\bm{w}}^*$ are not the exact minimum which is intractable.
We further assume that the norm of the gradient $\|\nabla\mathcal{L}(\bm{w}^*,\mathcal{D})\|$ and $\|\nabla\mathcal{L}(\tilde{\bm{w}}^*,\mathcal{D}_r)\|$ are bounded by $G$.
Consequently, we have the adjusted approximation error bound in nonconvergence cases.
\begin{proposition}\label{pro:practical upper bound}
Let $\bm{w}^*$ and $\tilde{\bm{w}}^*$ be two learned model in practice such that $\|\nabla\mathcal{L}(\bm{w}^*,\mathcal{D})\|,\|\nabla\mathcal{L}(\tilde{\bm{w}}^*,\mathcal{D}_r)\|\leq G$ and $\|\bm{w}^*\|,\|\tilde{\bm{w}}^*\|\leq C$.
Let $\lambda_{min}$ be the smallest eigenvalue of $\bm{H}_{\bm{w}^*}$, and $s$ be the recursion number for the inverse Hessian approximation.
Let $\tilde{\bm{w}}=\bm{w}^*+\frac{n_u}{(n-n_u)H}\tilde{\bm{H}}^{-1}_{s,\lambda}\nabla\mathcal{L}(\bm{w}^*,\mathcal{D}_u)$ be an approximation of $\Tilde{\bm{w}}^*$, where $\lambda>\|\bm{H}_{\bm{w}^*}\|_2$. 
Consider \cref{asp:continuity1} and \cref{asp:continuity2}, when $s\geq2\frac{L+\lambda}{\lambda+\lambda_{\min}}\mathrm{ln}\frac{L+\lambda}{\lambda+\lambda_{\min}}$, the following inequality holds with a probability larger than $1-\rho$.
\begin{equation}\label{eq:practical upper bound}
\scriptsize
\begin{aligned}
\|\tilde{\bm{w}}-\tilde{\bm{w}}^*\|_2&\leq\frac{2C(MC+\lambda)+G}{\lambda+\lambda_{min}} \\
&\quad\quad+\left(\frac{16\sqrt{\mathrm{ln}\,d/\rho}(\lambda+L)}{\lambda+\lambda_{min}}+\frac{1}{16}\right)(2LC+G).
\end{aligned}
\end{equation}
\end{proposition}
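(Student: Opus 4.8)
The plan is to follow the proof of \cref{thm:efficient upper bound} almost verbatim, relaxing the two places where exact optimality was invoked: the convergent analysis uses $\nabla\mathcal{L}(\bm{w}^*,\mathcal{D})=\bm{0}$ (so the gradient replacement is exact) and $\nabla\mathcal{L}(\tilde{\bm{w}}^*,\mathcal{D}_r)=\bm{0}$ (so the first-order expansion of the retained gradient at the minimizer has no zeroth-order term). Here each equality is replaced by the norm bound $\le G$. First I would introduce the exact regularized Newton step $\tilde{\bm{w}}_{\mathrm{N}} := \bm{w}^* - (\bm{H}_{\bm{w}^*}+\lambda\bm{I})^{-1}\nabla\mathcal{L}(\bm{w}^*,\mathcal{D}_r)$ as an intermediate point and split by the triangle inequality:
\[
\|\tilde{\bm{w}}-\tilde{\bm{w}}^*\|_2 \le \|\tilde{\bm{w}}_{\mathrm{N}}-\tilde{\bm{w}}^*\|_2 + \|\tilde{\bm{w}}-\tilde{\bm{w}}_{\mathrm{N}}\|_2,
\]
so the first term isolates the Taylor-truncation error and the second isolates the error of the efficient computation (inverse-Hessian estimation plus gradient replacement).

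For the truncation term I would expand $\nabla\mathcal{L}(\cdot,\mathcal{D}_r)$ to first order at $\bm{w}^*$ and evaluate at $\tilde{\bm{w}}^*$, writing $\nabla\mathcal{L}(\tilde{\bm{w}}^*,\mathcal{D}_r) = \nabla\mathcal{L}(\bm{w}^*,\mathcal{D}_r) + \bm{H}_{\bm{w}^*}(\tilde{\bm{w}}^*-\bm{w}^*) + \bm{R}$, with $\|\bm{R}\|_2\le\frac{M}{2}\|\bm{w}^*-\tilde{\bm{w}}^*\|_2^2$ by \cref{asp:continuity2}, exactly as in \cref{lmm:error bound} and \cref{thm:upper bound}. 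Adding the regularizer and solving for $\tilde{\bm{w}}^*-\bm{w}^*$ through $(\bm{H}_{\bm{w}^*}+\lambda\bm{I})^{-1}$, the difference $\tilde{\bm{w}}_{\mathrm{N}}-\tilde{\bm{w}}^*$ collects three contributions: the remainder $\bm{R}$, the regularization term $\lambda(\tilde{\bm{w}}^*-\bm{w}^*)$, and, crucially, the new term $\nabla\mathcal{L}(\tilde{\bm{w}}^*,\mathcal{D}_r)$, which no longer vanishes. Using $\|(\bm{H}_{\bm{w}^*}+\lambda\bm{I})^{-1}\|_2 = 1/(\lambda+\lambda_{min})$, $\|\bm{w}^*-\tilde{\bm{w}}^*\|_2\le 2C$, and $\|\nabla\mathcal{L}(\tilde{\bm{w}}^*,\mathcal{D}_r)\|_2\le G$ gives $\|\tilde{\bm{w}}_{\mathrm{N}}-\tilde{\bm{w}}^*\|_2\le\frac{2C(MC+\lambda)+G}{\lambda+\lambda_{min}}$, which is exactly the extra $+G$ in the numerator of the first summand.

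For the efficient-computation term I would reuse, unchanged, the high-probability spectral bound on the LiSSA estimator from \cref{thm:efficient upper bound}: since $\tilde{\bm{H}}^{-1}_{s,\lambda}$ is built from Hessian samples at the \emph{fixed} point $\bm{w}^*$, the matrix-concentration argument controlling $\|\frac{1}{H}\tilde{\bm{H}}^{-1}_{s,\lambda}-(\bm{H}_{\bm{w}^*}+\lambda\bm{I})^{-1}\|_2$ with probability at least $1-\rho$ (the source of the $\sqrt{\ln d/\rho}$ factor) is insensitive to nonconvergence. I would then expand $\tilde{\bm{w}}-\tilde{\bm{w}}_{\mathrm{N}}$ using $\frac{n_u}{n-n_u}\nabla\mathcal{L}(\bm{w}^*,\mathcal{D}_u)=\frac{n}{n-n_u}\nabla\mathcal{L}(\bm{w}^*,\mathcal{D})-\nabla\mathcal{L}(\bm{w}^*,\mathcal{D}_r)$ into (i) an estimation-error part, the estimator error applied to $\nabla\mathcal{L}(\bm{w}^*,\mathcal{D}_r)$, and (ii) a gradient-replacement part proportional to $\nabla\mathcal{L}(\bm{w}^*,\mathcal{D})$, which vanished in the convergent proof but is now controlled by $\|\nabla\mathcal{L}(\bm{w}^*,\mathcal{D})\|_2\le G$. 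The key deterministic estimate is $\|\nabla\mathcal{L}(\bm{w}^*,\mathcal{D}_r)\|_2\le 2LC+G$, obtained from the triangle inequality anchored at $\tilde{\bm{w}}^*$ together with the $L$-Lipschitz gradient (\cref{asp:continuity1}) and $\|\bm{w}^*-\tilde{\bm{w}}^*\|_2\le 2C$. Combining this with the spectral bounds collapses both sub-errors into the common factor $(2LC+G)$ and yields the second summand $\big(\frac{16\sqrt{\ln d/\rho}}{\lambda+\lambda_{min}}+\frac{1}{16}\big)(2LC+G)$.

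I expect the main obstacle to be the deterministic gradient bookkeeping rather than anything probabilistic. Concretely, one must (a) isolate the new gradient-replacement term, which appears only because $\nabla\mathcal{L}(\bm{w}^*,\mathcal{D})\neq\bm{0}$, and fold it together with the inverse-Hessian-estimation error so that $(2LC+G)$ emerges as a single factor, and (b) redistribute the numerical constants accordingly; the changes $32\to 16$ and $\tfrac18\to\tfrac1{16}$ relative to \cref{thm:efficient upper bound} are precisely the rebalancing of those constants against the enlarged gradient factor. All of this is routine once the decomposition is fixed, since the LiSSA concentration analysis transfers verbatim and only the gradient-norm terms acquire $G$-dependence.
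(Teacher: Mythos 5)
Your decomposition is, up to a reshuffling of the triangle inequality, the same as the paper's: the paper writes $\tilde{\bm{w}}-\tilde{\bm{w}}^*$ as the sum of the ``efficient Newton'' error $\bm{w}^*-\tilde{\bm{w}}^*-\frac{1}{H}\tilde{\bm{H}}^{-1}_{s,\lambda}\bigl(\nabla\mathcal{L}(\bm{w}^*,\mathcal{D}_r)-\nabla\mathcal{L}(\tilde{\bm{w}}^*,\mathcal{D}_r)\bigr)$ (bounded by reusing \cref{thm:efficient upper bound}) and the residual term $\frac{1}{H}\tilde{\bm{H}}^{-1}_{s,\lambda}\nabla\mathcal{L}(\tilde{\bm{w}}^*,\mathcal{D}_r)$, which it splits against $(\bm{H}_{\bm{w}^*}+\lambda\bm{I})^{-1}$ and bounds by $\bigl(\frac{1}{\lambda+\lambda_{min}}+\frac{16\sqrt{\ln d/\rho}}{\lambda+\lambda_{min}}+\frac{1}{16}\bigr)G$; your intermediate point $\tilde{\bm{w}}_{\mathrm{N}}$, your $+G/(\lambda+\lambda_{min})$ in the first summand, and your bound $\|\nabla\mathcal{L}(\bm{w}^*,\mathcal{D}_r)\|\le 2LC+G$ reproduce exactly the same arithmetic, and your reading of $32\to16$, $\frac18\to\frac1{16}$ as a rewriting of $(\cdot)LC$ into $(\cdot)\cdot 2LC$ is correct.

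The one place you diverge is the gradient-replacement step, and there your argument does not close. You correctly observe that under nonconvergence $\frac{n_u}{n-n_u}\nabla\mathcal{L}(\bm{w}^*,\mathcal{D}_u)=\frac{n}{n-n_u}\nabla\mathcal{L}(\bm{w}^*,\mathcal{D})-\nabla\mathcal{L}(\bm{w}^*,\mathcal{D}_r)$ rather than $-\nabla\mathcal{L}(\bm{w}^*,\mathcal{D}_r)$, so a genuinely new term $\frac{n}{n-n_u}\cdot\frac{1}{H}\tilde{\bm{H}}^{-1}_{s,\lambda}\nabla\mathcal{L}(\bm{w}^*,\mathcal{D})$ appears. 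But this term cannot be ``collapsed into the common factor $(2LC+G)$'' as you claim: it is multiplied by the \emph{full} operator norm of $\tilde{\bm{H}}^{-1}_{s,\lambda}/H$, which is roughly $\frac{1}{\lambda+\lambda_{min}}$ \emph{plus} the concentration error, and it carries the factor $\frac{n}{n-n_u}>1$; carried out honestly it adds roughly $\frac{n}{n-n_u}\bigl(\frac{1}{\lambda+\lambda_{min}}+\frac{16\sqrt{\ln d/\rho}}{\lambda+\lambda_{min}}+\frac{1}{16}\bigr)G$ on top of \cref{eq:practical upper bound}, so you would prove a strictly weaker bound than the one stated. The paper avoids this only by continuing to apply the exact-minimizer identity $\nabla\mathcal{L}(\bm{w}^*,\mathcal{D}_r)=-\frac{n_u}{n-n_u}\nabla\mathcal{L}(\bm{w}^*,\mathcal{D}_u)$ in its first line, i.e., the assumption $\|\nabla\mathcal{L}(\bm{w}^*,\mathcal{D})\|\le G$ is stated but never actually used there. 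So you have identified a real issue that the paper glosses over, but your resolution of it (absorption into the existing terms) is incorrect; to match \cref{eq:practical upper bound} exactly you must either follow the paper and keep the replacement identity, or accept the extra $G$-term in the bound.
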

The proof of \cref{pro:practical upper bound} is provided in \cref{sec:proof}. 
With \cref{pro:practical upper bound}, we can derive a more precise approximation error bound in practice, knowing the tractable residual gradient of the learning algorithm $\mathcal{A}$: $\|\nabla\mathcal{L}(\bm{w}^*,\mathcal{D})\|$.

\begin{algorithm}[t]
\caption{Sequential Unlearning for DNNs}\label{alg:sequential unlearning}
\textbf{Input}: original trained model $\bm{w}^*$; certification budget $\varepsilon$ and $\delta$; local convex coefficient $\lambda$; norm upper bound $C$; number of recursion $s$; unlearned sets $\{\mathcal{D}_{u_1},\dots,\mathcal{D}_{u_k}\}$. \\
\textbf{Output}: certified unlearning model $\bm{w}^-$. \\
\vspace{-4mm}
\begin{algorithmic}[1] %[1] enables line numbers
\STATE $\tilde{\bm{w}}_0\leftarrow\bm{w}^*$, $\mathcal{D}_{r_0}\leftarrow\mathcal{D}$.
\FOR{$i=1,\dots,k$}
\STATE $\mathcal{D}_{r_i}\leftarrow\mathcal{D}_{r_{i-1}}\backslash\mathcal{D}_{u_i}$.
\STATE Obtain $\nabla\mathcal{L}(\tilde{\bm{w}}_{k-1},\mathcal{D}_{r_k})$ and $\{\bm{H}_{1,\lambda,i},\dots,\bm{H}_{s,\lambda,i}\}$.
\STATE $\bm{P}_{0,\lambda,i}\leftarrow\nabla\mathcal{L}(\tilde{\bm{w}}_{k-1},\mathcal{D}_{r_k})$.
\FOR{$j=1,\dots,s$}
\STATE $\bm{P}_{j,\lambda,i}\leftarrow\nabla\mathcal{L}(\tilde{\bm{w}}_{k-1},\mathcal{D}_{r_k})+(\bm{I}-\frac{\bm{H}_{j,\lambda,i}}{H})\bm{P}_{j-1,\lambda,i}$.
\ENDFOR
\STATE $\tilde{\bm{w}}_i\leftarrow\tilde{\bm{w}}_{i-1}-\frac{\bm{P}_{s,\lambda,i}}{H}$.
\ENDFOR
\STATE Compute the error bound $\Delta$ based on \cref{eq:practical upper bound}.
\STATE $\sigma\leftarrow\frac{\Delta}{\varepsilon}\sqrt{2\mathrm{ln(1.25/\delta)}}$.
\STATE $\bm{w}^-\leftarrow\tilde{\bm{w}}_k+Y$, where $Y\sim\mathcal{N}(0,\sigma^2\bm{I})$.
\end{algorithmic}
\end{algorithm}

\noindent\textbf{\textit{Sequential Unlearning.}}
In practical scenarios, the user can send unlearning requests at different time points in a sequential order~\citep{guo2020certified,nguyen2022survey}.
For example, after unlearning one user's data, another user sends the unlearning request.
In such a case, we should unlearn the second user's data based on the unlearned model of the first user's data.
Consequently, certified unlearning should be able to work in a sequential setting to fit real-world scenarios.
We next show that our certified unlearning algorithm can be easily implemented in a sequential manner.
Let $\mathcal{D}_{u_k}$ be the unlearned set of the $k$-th unlearning request and $\mathcal{D}_{r_k}=\mathcal{D}\backslash\cup_{i=1}^k\mathcal{D}_{u_i}$ be the retained set after the $k$-th unlearning request. 
We estimate the retrained model sequentially as $\tilde{\bm{w}}_{k}=\tilde{\bm{w}}_{k-1}-\frac{1}{H}\tilde{\bm{H}}^{-1}_{s,\lambda,k-1}\nabla\mathcal{L}(\tilde{\bm{w}}_{k-1},\mathcal{D}_{r_k})$ where $\tilde{\bm{H}}^{-1}_{s,\lambda,k-1}$ is the approximation of the inverse Hessian over $\mathcal{D}_{r_k}$ at $\tilde{\bm{w}}_{k-1}$. %$(\nabla^2\mathcal{L}(\tilde{\bm{w}}_{k-1},\mathcal{D}_{r_k})+\lambda\bm{I})^{-1}$.
% Our sequential unlearning method at the $k$-th unlearning request is $\tilde{\bm{w}}_{k}=\tilde{\bm{w}}_{k-1}-\tilde{\bm{H}}^{-1}_{s,\lambda,k-1}\nabla\mathcal{L}(\tilde{\bm{w}}_{k-1},\mathcal{D}_{r_k})$ where $\tilde{\bm{H}}^{-1}_{s,\lambda,k-1}$ is the approximation of $(\nabla^2\mathcal{L}(\tilde{\bm{w}}_{k-1},\mathcal{D}_{r_k})+\lambda\bm{I})^{-1}$.
Note that in sequential approximation, we cannot use the efficient computation of $\nabla\mathcal{L}(\tilde{\bm{w}}_{k-1},\mathcal{D}_{r_k})$ as $\tilde{\bm{w}}_{k-1}$ is not a minimum.
After estimating the retrained model after the $k$-th unlearning request, we add noise to obtain the unlearned model according to \cref{thm:certified unlearning}.
We can prove that our sequential approximation also has the same approximation error bound as single-step cases.
\begin{proposition}\label{pro:sequential upper bound}
% Let $\tilde{\bm{w}}_{0}=\mathrm{argmin}_{\|\bm{w}\|_2\leq C}\mathcal{L}(\bm{w},\mathcal{D})$.
Let $\tilde{\bm{w}}_{0}$ be a model trained on $\mathcal{D}$, $\bm{w}^*_{k}$ be a model trained on $\mathcal{D}_{r_k}$, and
% Let $\lambda_{min}$ be the smallest eigenvalue of $\nabla^2\mathcal{L}(\tilde{\bm{w}}_{k-1},\mathcal{D}_{r_k})$, and $s$ be the recursion number for the inverse Hessian approximation.
$\tilde{\bm{w}}_{k}=\tilde{\bm{w}}_{k-1}-\tilde{\bm{H}}^{-1}_{s,\lambda,k-1}\nabla\mathcal{L}(\tilde{\bm{w}}_{k-1},\mathcal{D}_{r_k})$ be an approximation of $\bm{w}_k^*=\mathrm{argmin}_{\|\bm{w}\|_2\leq C}\mathcal{L}(\bm{w},\mathcal{D}_{r_k})$ for $k=1,2,\dots$, where $\lambda_{min}$ is the smallest eigenvalue of $\nabla^2\mathcal{L}(\tilde{\bm{w}}_{k-1},\mathcal{D}_{r_k})$ and $\lambda>\|\bm{H}_{\bm{w}^*}\|_2$. 
Consider \cref{asp:continuity1} and \cref{asp:continuity2} and assume $\|\nabla\mathcal{L}(\tilde{\bm{w}}_k,\mathcal{D}_{r_{k-1}})\|,\|\nabla\mathcal{L}(\bm{w}^*_k,\mathcal{D}_{r_k})\|\leq G$ for $k=1,2,\dots$, the sequential approximation error $\|\tilde{\bm{w}}_{k}-\bm{w}_k^*\|_2$ has the same upper bound as \cref{pro:practical upper bound}.
% \begin{equation}\label{eq:sequential upper bound}
% \begin{aligned}
% \|\tilde{\bm{w}}_k-\bm{w}^*_k\|_2\leq&\frac{2C(MC+\lambda)+G}{\lambda+\lambda_{min}} \\
% &\quad\quad+\left(\frac{16\sqrt{\mathrm{ln}\,d/\rho}}{\lambda+\lambda_{min}}+\frac{1}{16}\right)(2LC+G).
% \end{aligned}
% \end{equation}
% the sequential approximation error $\|\tilde{\bm{w}}_{k}-\bm{w}_k^*\|_2$ shares the same bound with the single-unlearn approximation error $\|\tilde{\bm{w}}-\bm{w}^*\|_2$.
% the following inequality holds with a probability larger than $1-\rho$.
% \begin{equation}\label{eq:sequential upper bound}
% \|\tilde{\bm{w}}_{k}-\bm{w}_k^*\|_2\leq\frac{2C(MC+\lambda)}{\lambda+\lambda_{min}}+\left(\frac{32\sqrt{\mathrm{ln}\,d/\rho}}{\lambda+\lambda_{min}}+\frac{1}{8}\right)LC.
% \end{equation}
\end{proposition}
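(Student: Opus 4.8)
The plan is to prove \cref{pro:sequential upper bound} by reinterpreting each step of the sequential procedure as a single instance of the nonconvergence bound in \cref{pro:practical upper bound}, and then arguing that the resulting per-step bound does not accumulate with $k$. Concretely, I would fix an index $k$ and apply the dictionary $\bm{w}^*\mapsto\tilde{\bm{w}}_{k-1}$, $\tilde{\bm{w}}^*\mapsto\bm{w}_k^*$, $\mathcal{D}\mapsto\mathcal{D}_{r_{k-1}}$, $\mathcal{D}_r\mapsto\mathcal{D}_{r_k}$, and $\mathcal{D}_u\mapsto\mathcal{D}_{u_k}$ (using $\mathcal{D}_{r_k}=\mathcal{D}_{r_{k-1}}\backslash\mathcal{D}_{u_k}$). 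Under this identification the update $\tilde{\bm{w}}_k=\tilde{\bm{w}}_{k-1}-\tilde{\bm{H}}^{-1}_{s,\lambda,k-1}\nabla\mathcal{L}(\tilde{\bm{w}}_{k-1},\mathcal{D}_{r_k})$ is exactly the LiSSA-based Newton step of \cref{pro:practical upper bound}, with the single difference that it uses the exact retained gradient $\nabla\mathcal{L}(\tilde{\bm{w}}_{k-1},\mathcal{D}_{r_k})$ rather than its $\mathcal{D}_u$-based efficient surrogate. Since computing the retained gradient directly only removes an error source (the residual incurred by replacing $\nabla\mathcal{L}(\cdot,\mathcal{D}_r)$ via the minimizer identity), the same error decomposition carries over and can only be tighter; it therefore suffices to re-run that decomposition verbatim at the new base point.

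Next I would verify that the hypotheses of \cref{pro:practical upper bound} transfer. \cref{asp:continuity1} and \cref{asp:continuity2} are global, so the $L$-smoothness and $M$-Hessian-Lipschitz ingredients are unchanged; the local convex approximation condition $\lambda>\|\nabla^2\mathcal{L}(\tilde{\bm{w}}_{k-1},\mathcal{D}_{r_k})\|_2$ is imposed at the current base point, and $\lambda_{min}$ is read off the Hessian there. The standing assumptions $\|\nabla\mathcal{L}(\tilde{\bm{w}}_{k-1},\mathcal{D}_{r_{k-1}})\|_2\le G$ and $\|\nabla\mathcal{L}(\bm{w}_k^*,\mathcal{D}_{r_k})\|_2\le G$ play the roles of the two residual-gradient bounds in the nonconvergence case, and the concentration from \cref{pro:Hessian estimator} applies to the estimator $\tilde{\bm{H}}^{-1}_{s,\lambda,k-1}$ built at $\tilde{\bm{w}}_{k-1}$ over $\mathcal{D}_{r_k}$, contributing the identical $\bigl(\tfrac{16\sqrt{\ln d/\rho}}{\lambda+\lambda_{min}}+\tfrac{1}{16}\bigr)(2LC+G)$ term with failure probability at most $\rho$. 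The one hypothesis requiring care is the norm bound $\|\tilde{\bm{w}}_{k-1}\|_2\le C$: the target $\bm{w}_k^*$ lies in the ball by construction, and for the base point I would either project $\tilde{\bm{w}}_{k-1}$ onto $\{\bm{w}:\|\bm{w}\|_2\le C\}$ (consistent with the projected-gradient learning process) or carry $\|\tilde{\bm{w}}_{k-1}\|_2\le C$ as a standing assumption, exactly as in the single-step analysis.

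The step I expect to be the main obstacle, and the crux of the argument, is showing that the bound does \emph{not} grow with $k$. A naive chaining of $\|\tilde{\bm{w}}_k-\bm{w}_k^*\|_2$ through the intermediate targets would appear to accumulate error at every request. The resolution is that the single-step estimate of \cref{thm:upper bound} and \cref{pro:practical upper bound} is a \emph{worst-case} bound that depends on the start-to-target distance only through $\|\tilde{\bm{w}}_{k-1}-\bm{w}_k^*\|_2\le\|\tilde{\bm{w}}_{k-1}\|_2+\|\bm{w}_k^*\|_2\le 2C$, which is controlled purely by the norm ball and is independent of $k$ and of how accurate the earlier approximations were. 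Hence the per-step reduction never requires $\tilde{\bm{w}}_{k-1}$ to be close to $\bm{w}_{k-1}^*$ — only to lie in the ball — so each step is bounded in isolation by the same expression.

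This also explains why no union bound over the $k$ steps is needed: conditioned on any realization of $\tilde{\bm{w}}_{k-1}$ in the ball that meets the gradient assumption, the fresh LiSSA sampling at the final step alone yields the bound with probability $1-\rho$, and since this holds uniformly over all such realizations it holds unconditionally by the law of total probability. Assembling the worst-case distance $2C$, the regularization and Hessian-Lipschitz terms, the residual-gradient terms in $G$, and the LiSSA term then reproduces the right-hand side of \cref{eq:practical upper bound} for $\|\tilde{\bm{w}}_k-\bm{w}_k^*\|_2$, completing the proof.
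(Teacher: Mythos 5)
Your proposal is correct and follows essentially the same route as the paper, whose proof of this proposition is literally the one line ``Follow the proof of \cref{pro:practical upper bound}'' accompanied by a remark that the bound does not accumulate because the term $\|\bm{w}^*-\tilde{\bm{w}}^*\|_2\leq 2C$ is a worst-case estimate independent of the unlearning step. You identify exactly that crux, and you are in fact more careful than the paper in flagging the details it glosses over (the norm bound on the base point $\tilde{\bm{w}}_{k-1}$ and the absence of a union bound over steps).
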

The proof of \cref{pro:sequential upper bound} is provided in \cref{sec:proof}.
\cref{pro:sequential upper bound} provides a theoretical guarantee on the certification for sequential unlearning.
As mentioned before, our constraint on model parameters provides a worst-case upper bound of $\|\bm{w}^*-\tilde{\bm{w}}^*\|_2^2$ independent on $\mathcal{D}_u$ so that our error bound in \cref{eq:practical upper bound} remains unchanged as the number of unlearned data gradually increases.
We illustrate the details of sequential unlearning in \cref{alg:sequential unlearning}.
% More details on sequential unlearning can be found in \cref{sec:sequential}.
% \cref{pro:sequential upper bound} provides the theoretical guarantee for the sequential manner of certified unlearning in practical use.
% It is worth noting that $\mathcal{D}$ and $\mathcal{D}_{r_k}$ are no longer adjacent for $k=2,3,\dots$ in the sequential setting.
% We consider the group privacy notion~\citep{dwork2014algorithmic} for perpetuating the certification.
% Correspondingly, the certification budgets increase linearly in sequential unlearning.
% \section{Sequential Certified Unlearning}\label{sec:sequential}
% We introduce the sequential unlearning algorithm as Algorithm~\ref{alg:certified unlearning} where Algorithm~\ref{alg:certified unlearning} sequentially unlearns $k$ groups of data records.
% As Proposition~\ref{pro:sequential upper bound} provides the theoretical guarantee for Algorithm~\ref{alg:certified unlearning}, we achieve the certification of the sequential unlearning which fits practical needs.
It is worth noting that we fix the number of unlearned samples in a single unlearning process to be $b$ in Definition~\ref{def:certified unlearning}.
During the sequential unlearning process, the number of unlearned samples keeps increasing.
We can adapt the value of $b$ after each unlearning request to fit the size of the current unlearned set in a post hoc manner.
Another rigorous way to preserve the certification while fixing the unlearning granularity $b$ is to incorporate the group privacy notion~\citep{dwork2014algorithmic} into the sequential unlearning process.
Specifically, we can fix the unlearning granularity as the number of unlearning samples in a single unlearning request by linearly increasing the certification budget $\varepsilon$ as $k\varepsilon$, where $k$ is the number of the unlearning requests.

\vspace{1.5mm}
\noindent\textbf{\textit{Reducing the Approximation Error Bound.}}
With a lower approximation error bound, we can achieve the same certification budget by adding a \textit{smaller} noise, leading to a smaller impact on the performance of the unlearned model.
Therefore, we summarize our results and list several ways to reduce the approximation error bound in practice.
\begin{itemize}[leftmargin=*]
\item \textbf{Increasing the value of $\lambda$.} According to \cref{eq:practical upper bound}, we can easily obtain that the approximation error bound decreases after increasing the value of $\lambda$. 
\item \textbf{Decreasing the value of $C$.} Similarly, we can reduce the approximation error bound by decreasing the value of $C$. 
However, reducing $C$ can impact the effectiveness of the original model. 
We should carefully select a proper $C$ without compromising the model performance distinctly.
\item \textbf{Increasing the regularization.} Adding a larger regularization would increase the value of $\lambda_{min}$.
According to \cref{eq:practical upper bound}, the approximation error bound decreases when $\lambda_{min}$ increases, which is consistent with the empirical results in~\citep{basu2021influence}.
\end{itemize}

\begin{table*}[t]
\small
\renewcommand{\arraystretch}{1.05}
\tabcolsep = 3pt
\centering
% \vspace{-8mm}
\caption{Comparison between the certified unlearning method and unlearning baselines over three popular DNNs across three real-world datasets. We record the micro F1-score of the predictions on the unlearned set $\mathcal{D}_u$, retained set $\mathcal{D}_r$, and test set $\mathcal{D}_t$.}
\label{tab:performance}
\vspace{1mm}
\aboverulesep = 0pt
\belowrulesep = 0pt
\begin{tabular}{c|ccc|ccc|ccc}
\toprule
\multirow{2}{*}{\textbf{Method}} & \multicolumn{3}{c|}{\textbf{MLP \& MNIST}} & \multicolumn{3}{c|}{\textbf{AllCNN \& CIFAR-10}} & \multicolumn{3}{c}{\textbf{ResNet18 \& SVHN}} \\
& F1 on $\mathcal{D}_u$ & F1 on $\mathcal{D}_r$ & F1 on $\mathcal{D}_t$ & F1 on $\mathcal{D}_u$ & F1 on $\mathcal{D}_r$ & F1 on $\mathcal{D}_t$ & F1 on $\mathcal{D}_u$ & F1 on $\mathcal{D}_r$ & F1 on $\mathcal{D}_t$ \\
\midrule
Original & 98.30 {\scriptsize$\pm$ 0.51} & 98.37 {\scriptsize$\pm$ 0.06} & 97.50 {\scriptsize$\pm$ 0.08} & 87.97 {\scriptsize$\pm$ 3.01} & 90.71 {\scriptsize$\pm$ 1.11} & 83.04 {\scriptsize$\pm$ 0.20} & 94.53 {\scriptsize$\pm$ 0.74} & 95.00 {\scriptsize$\pm$ 0.47} & 93.26 {\scriptsize$\pm$ 0.34} \\
% \hline
Retrain & 97.20 {\scriptsize$\pm$ 0.29} & 98.27 {\scriptsize$\pm$ 0.09} & 97.19 {\scriptsize$\pm$ 0.15} & 82.67 {\scriptsize$\pm$ 0.57} & 90.10 {\scriptsize$\pm$ 0.98} & 82.39 {\scriptsize$\pm$ 0.98} & 93.13 {\scriptsize$\pm$ 0.80} & 95.73 {\scriptsize$\pm$ 0.60} & 93.34 {\scriptsize$\pm$ 0.15} \\
\hline
Fine Tune & 97.67 {\scriptsize$\pm$ 0.33} & 98.35 {\scriptsize$\pm$ 0.15} & 97.22 {\scriptsize$\pm$ 0.09} & 89.84 {\scriptsize$\pm$ 1.99} & 92.25 {\scriptsize$\pm$ 0.26} & 84.31 {\scriptsize$\pm$ 0.54} & 93.73 {\scriptsize$\pm$ 0.33} & 95.23 {\scriptsize$\pm$ 0.47} & 93.75 {\scriptsize$\pm$ 0.41} \\
Neg Grad & 97.83 {\scriptsize$\pm$ 0.59} & 98.09 {\scriptsize$\pm$ 0.28} & 97.22 {\scriptsize$\pm$ 0.16} & 79.60 {\scriptsize$\pm$ 1.92} & 85.98 {\scriptsize$\pm$ 2.97} & 78.66 {\scriptsize$\pm$ 1.94} & 92.13 {\scriptsize$\pm$ 0.77} & 92.10 {\scriptsize$\pm$ 0.60} & 92.10 {\scriptsize$\pm$ 0.60} \\
Fisher & 97.70 {\scriptsize$\pm$ 0.90} & 97.56 {\scriptsize$\pm$ 0.14} & 96.69 {\scriptsize$\pm$ 0.05} & 87.97 {\scriptsize$\pm$ 3.69} & 90.71 {\scriptsize$\pm$ 1.36} & 83.04 {\scriptsize$\pm$ 0.24} & 94.23 {\scriptsize$\pm$ 0.91} & 94.84 {\scriptsize$\pm$ 0.49} & 93.06 {\scriptsize$\pm$ 0.29} \\
L-CODEC & 98.27 {\scriptsize$\pm$ 0.61} & 98.35 {\scriptsize$\pm$ 0.07} & 97.46 {\scriptsize$\pm$ 0.09} & 88.20 {\scriptsize$\pm$ 3.70} & 90.98 {\scriptsize$\pm$ 1.28} & 83.33 {\scriptsize$\pm$ 0.23} & 95.00 {\scriptsize$\pm$ 1.06} & 95.83 {\scriptsize$\pm$ 0.35} & 93.53 {\scriptsize$\pm$ 0.08} \\
\textbf{Certified} & 97.60 {\scriptsize$\pm$ 0.96} & 98.28 {\scriptsize$\pm$ 0.05} & 97.37 {\scriptsize$\pm$ 0.11} & 87.83 {\scriptsize$\pm$ 3.62} & 90.68 {\scriptsize$\pm$ 1.32} & 83.04 {\scriptsize$\pm$ 0.38} & 93.73 {\scriptsize$\pm$ 0.76} & 94.61 {\scriptsize$\pm$ 0.57} & 92.94 {\scriptsize$\pm$ 0.49} \\
\bottomrule
\end{tabular}
\vspace{-3mm}
\end{table*}

\section{Experiments}
\subsection{Dataset Information}
We conduct experiments based on three widely adopted real-world datasets for image classification, MNIST~\citep{lecun1998gradient}, SVHN~\citep{netzer2011reading}, and CIFAR-10~\citep{krizhevsky2009learning} to evaluate certified unlearning for DNNs.
The MNIST dataset consists of a collection of 60,000 handwritten digit images for training and 10,000 images for testing. 
The CIFAR-10 dataset contains 60,000 color images in 10 classes, with each class representing a specific object category. The dataset is split into 50,000 training images and 10,000 test images.
The SVHN dataset consists of house numbers images captured from Google Street View. 
The dataset is divided into three subsets: 73,257 training images, 26,032 test images, and 531,131 extra images. 
We only use the training set and the test set.

\subsection{Baseline Information}
To evaluate unlearning methods, we choose three different types of original models for the image classification task.
In particular, we train a three-layer MLP model, an AllCNN~\citep{springenberg2015striving} model, and a ResNet18~\citep{he2016deep} model for MNIST, CIFAR-10, and SVHN, respectively. 
Moreover, we compare the performance of our certified unlearning method with five unlearning baselines shown as follows. 
\textbf{Retrain from scratch}: retraining from scratch is an ideal unlearning baseline since it is usually considered as an exact unlearning method;
\textbf{Fine tune}~\citep{golatkar2020eternal}: fine-tuning the original model on the retained set $\mathcal{D}_r$ for one epoch after training the original model;
\textbf{Negative gradient}~\citep{golatkar2020eternal}: conducting the gradient ascent based on the gradient in terms of the unlearned set $\mathcal{D}_u$ for one epoch after training the original model;
\textbf{Fisher forgetting}~\citep{golatkar2020eternal}: after training the original model, the Fisher forgetting baseline exploits the Fisher information matrix to substitute the Hessian in the single-step Newton update;
\textbf{L-CODEC}~\citep{mehta2022deep}: after training the original model, L-CODEC selects a subset of the model parameters to compute the Hessian based on the Fisher information matrix for efficiency.

\subsection{Implementation}
We implemented all experiments in the PyTorch~\citep{paszke2019pytorch} package and exploited Adam~\citep{kingma2015adam} as the optimizer.
All experiments are implemented on an Nvidia RTX A6000 GPU.
We reported the average value and the standard deviation of the numerical results under three random seeds. 
For the values of $L$ and $M$ in our theoretical results, since finding a practical upper bound of the Lipschitz constant can be intractable for real-world tasks, we follow most previous works~\citep{koh2017understanding,wu2023gif,wu2023certified} to set them as hyperparameters which can be adjusted flexibly to adapt to different scenarios. 
The choice of these hyperparameters will not affect the soundness of our theoretical results, but render an imprecise value of the certification level, discussed in \Cref{sec:implementation}.
More detailed hyperparameter settings of certified unlearning and baselines can be found in \cref{sec:implementation} and \cref{sec:parameter}.
Our code is available at \url{https://github.com/zhangbinchi/certified-deep-unlearning}.

\begin{table*}[t]
\small
\renewcommand{\arraystretch}{1.05}
\tabcolsep = 3.6pt
\centering
\caption{Comparison between the certified unlearning method and unlearning baselines over three popular DNNs across three real-world datasets. We record the relearn time, the accuracy of the membership inference attack, and the AUC score of the membership inference attack for measuring the unlearning performance.}
\label{tab:unlearn metric}
\vspace{1mm}
\aboverulesep = 0pt
\belowrulesep = 0pt
\begin{tabular}{c|ccc|ccc|ccc}
\toprule
\multirow{2}{*}{\textbf{Method}} & \multicolumn{3}{c|}{\textbf{MLP \& MNIST}} & \multicolumn{3}{c|}{\textbf{AllCNN \& CIFAR-10}} & \multicolumn{3}{c}{\textbf{ResNet18 \& SVHN}} \\
& Relearn T & Attack Acc & Attack AUC & Relearn T & Attack Acc & Attack AUC & Relearn T & Attack Acc & Attack AUC \\
% \hline
% Original & 98.30 {\scriptsize$\pm$ 0.51} & 98.37 {\scriptsize$\pm$ 0.06} & 97.50 {\scriptsize$\pm$ 0.08} & 87.97 {\scriptsize$\pm$ 3.01} & 90.71 {\scriptsize$\pm$ 1.11} & 83.04 {\scriptsize$\pm$ 0.20} & 94.53 {\scriptsize$\pm$ 0.74} & 95.00 {\scriptsize$\pm$ 0.47} & 93.26 {\scriptsize$\pm$ 0.34} \\
\midrule
Retrain & 25 & 93.10 {\scriptsize$\pm$ 0.33} & 95.16 {\scriptsize$\pm$ 0.47} & 17 & 79.82 {\scriptsize$\pm$ 0.35} & 88.71 {\scriptsize$\pm$ 0.43} & 7 & 90.47 {\scriptsize$\pm$ 0.14} & 93.07 {\scriptsize$\pm$ 0.27} \\
\hline
Fine Tune & 17 & 93.65 {\scriptsize$\pm$ 0.23} & 95.37 {\scriptsize$\pm$ 0.46} & 14 & 79.42 {\scriptsize$\pm$ 1.05} & 88.13 {\scriptsize$\pm$ 0.66} & 7 & 90.63 {\scriptsize$\pm$ 0.32} & 92.96 {\scriptsize$\pm$ 0.31} \\
Neg Grad & 21 & 93.73 {\scriptsize$\pm$ 0.45} & 95.42 {\scriptsize$\pm$ 0.43} & 17 & 78.63 {\scriptsize$\pm$ 1.23} & 87.58 {\scriptsize$\pm$ 0.96} & 9 & 90.02 {\scriptsize$\pm$ 0.13} & 92.89 {\scriptsize$\pm$ 0.22} \\
Fisher & 21 & 93.85 {\scriptsize$\pm$ 0.22} & 95.37 {\scriptsize$\pm$ 0.51} & 14 & 79.70 {\scriptsize$\pm$ 1.03} & 88.58 {\scriptsize$\pm$ 0.76} & 9 & 90.47 {\scriptsize$\pm$ 0.84} & 93.13 {\scriptsize$\pm$ 0.19} \\
L-CODEC & 20 & 95.05 {\scriptsize$\pm$ 0.05} & 95.31 {\scriptsize$\pm$ 0.21} & 14 & 83.60 {\scriptsize$\pm$ 0.62} & 92.18 {\scriptsize$\pm$ 0.17} & 7 & 93.22 {\scriptsize$\pm$ 0.35} & 93.75 {\scriptsize$\pm$ 0.54} \\
\textbf{Certified} & 24 & 93.22 {\scriptsize$\pm$ 0.46} & 95.28 {\scriptsize$\pm$ 0.50} & 25 & 78.00 {\scriptsize$\pm$ 1.18} & 87.22 {\scriptsize$\pm$ 1.13} & 9 & 88.63 {\scriptsize$\pm$ 1.58} & 92.18 {\scriptsize$\pm$ 1.16} \\
\bottomrule
\end{tabular}
\vspace{-3mm}
\end{table*}

\subsection{Unlearning Performance}
To evaluate the performance of all unlearning baselines, we select different unlearning metrics according to existing studies.
\cref{tab:performance} exhibits the model utility, i.e., micro F1-score of the predictions over the unlearned set $\mathcal{D}_u$, retained set $\mathcal{D}_r$, and the test set $\mathcal{D}_t$.
Based on the meaning of machine unlearning, we expect the results of a desirable unlearning method to be \textit{close to the results of the retrained model}.
Hence, we use the retrain-from-scratch baseline as a standard for evaluating the unlearning baselines.
% From the experimental results, we can obtain two properties of the retraining method, the retrained model utility on the unlearned set $\mathcal{D}_u$ is close to the retrained model utility on the test set $\mathcal{D}_t$ and the retrained model utility on the retained set $\mathcal{D}_r$ is close to the original model utility on the retained set $\mathcal{D}_r$.
Regarding the performance of unlearning baselines, we have the following observations derived from the comparison of our method with unlearning baselines.
(1). The micro F1-score of the certified unlearning method on the unlearned set $\mathcal{D}_u$ is closest to the retrained model for most cases.
(2). The micro F1-score of the certified unlearning method on the retained set $\mathcal{D}_r$ and the test set $\mathcal{D}_t$ is closest to the retrained model on MNIST and CIFAR-10.
(3). Compared with the original training, certified unlearning obtains an unlearned model with a lower F1-score over $\mathcal{D}_u$, $\mathcal{D}_r$, and $\mathcal{D}_t$, but the utility over $\mathcal{D}_u$ always decreases the most, consistent with the goal of unlearning.
% \begin{itemize}[leftmargin=*]
% \item Our method has the closest micro F1-score of the predictions on the unlearned set $\mathcal{D}_u$ to the retrained model for all cases.
% \item Our method has the closest micro F1-score of the predictions on the retained set $\mathcal{D}_r$ and the test set $\mathcal{D}_t$ to the retrained model on MNIST and CIFAR-10 datasets.
% \item Compared with the original model, our method has a lower F1-score on three subsets, but the model utility always drops more distinctly on the unlearned set $\mathcal{D}_u$.
% \end{itemize}
In general, we can find that the negative gradient baseline leads to a large utility drop over all subsets and the L-CODEC baseline yields a distinct utility increase over all subsets.
In contrast, our method has a lower prediction utility over the unlearned set $\mathcal{D}_u$ and maintains a desirable utility over the retained set $\mathcal{D}_r$ and the test set $\mathcal{D}_t$.

\begin{figure}[t]
    \centering
    \vspace{2mm}
    \includegraphics[width=0.9\linewidth]{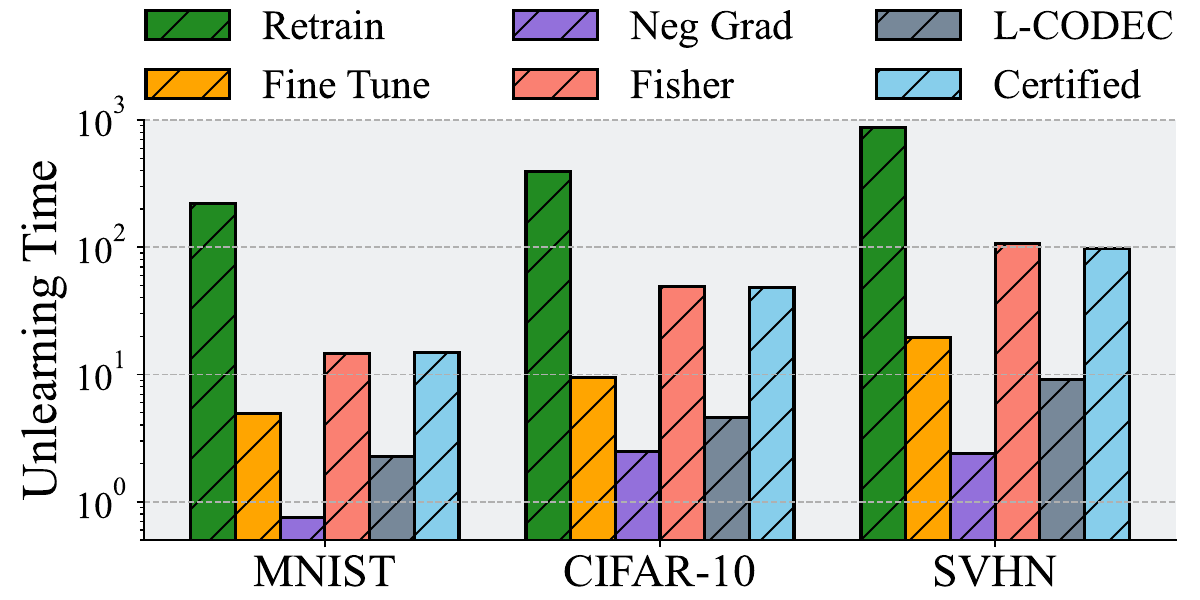}
    \vspace{-3mm}
    \caption{Comparison of unlearning time between the certified unlearning method and unlearning baselines over three popular DNNs across three datasets.}
    \label{fig:efficiency}
    \vspace{-5mm}
\end{figure}

\begin{table}[t]
\renewcommand{\arraystretch}{1.05}
\tabcolsep = 8pt
\centering
\caption{The value of approximation error bound, approximation error, and approximation error without constraint under different values of the local convex coefficient $\lambda$ over the AllCNN backbone on CIFAR-10.}
\label{tab:ablation}
\vspace{1mm}
\aboverulesep = 0pt
\belowrulesep = 0pt
\begin{tabular}{cccc}
\toprule
$\lambda$ & Err Bound & Approx Err & Approx Err (N) \\
\midrule
0 & 35077.7545 & 26.5146 & 46.8936 \\
10 & 390.3298 & 26.1841 & 46.7587 \\
$10^2$ & 78.4548 & 26.1724 & 46.7240 \\
$10^3$ & 46.9586 & 26.1692 & 46.7091 \\
$10^4$ & 43.8059 & 26.1688 & 46.7035 \\
\bottomrule
\end{tabular}
\vspace{-8mm}
\end{table}

In \cref{tab:unlearn metric}, we record several different unlearning metrics, i.e., relearn time, F1-score of membership inference attack, and AUC score of membership inference attack.
By relearning the unlearned model over the unlearned set, relearn time is the number of epochs for the loss function over the unlearned set $\mathcal{D}_u$ to descend to a fixed threshold.
We tried using the loss value of the original model over the forget set as the threshold but led to a very small relearn time that cannot discriminate any two baselines (most methods can recover the performance within 1 epoch). 
Finally, we artificially choose the threshold small enough to make the results of different baselines distinguishable.
As an unlearning metric, a lower relearn time indicates the unlearned model retains more information of the unlearned samples.
Moreover, we use the membership inference attack to evaluate the information leakage of the unlearning methods.
Membership inference attack trains an adversary to infer whether a data sample is used to train the unlearned model.
We choose the state-of-the-art membership inference attack method~\citep{chen2021machine} for evaluating machine unlearning methods.
Specifically, we use the accuracy and the AUC score to measure the utility of the membership inference attack.
A higher attack success rate indicates the unlearned model contains more information of the unlearned samples.
The experimental results demonstrate that our method is more effective than other unlearning baselines in removing the information of the unlearned samples.
% We can observe that the negative gradient baseline has the second-best unlearning effectiveness, but it over-reduces the utility over the retained set $\mathcal{D}_r$ and the test set $\mathcal{D}_t$.
It is worth noting that the retraining baseline is less desirable than other unlearning methods in some cases regarding the membership inference attack results.
The reason is that the retraining baseline is directly used to train the shadow unlearned model in the membership inference attack framework~\citep{chen2021machine} while the attack model is transferred to attack other unlearning baselines.
In conclusion, our method can effectively remove the information of unlearned samples from the unlearned model and preserve the privacy of unlearned samples.

\subsection{Efficiency}
% \begin{table}[t]
% \small
% \tabcolsep = 10pt
% \centering
% \caption{Comparison of unlearning time between the certified unlearning method and unlearning baselines over three popular DNNs across three real-world datasets.}
% \label{tab:unlearn metric}
% \vspace{-3mm}
% \begin{tabular}{c|ccc}
% % \setlength\tabcolsep{2.2pt}
% \hline
% \hline
% \textbf{Method} & \textbf{MNIST/s} & \textbf{CIFAR-10/s} & \textbf{SVHN/s} \\
% \hline
% Retrain & 220.19 & 396.19 & 871.70 \\
% Fine Tune & 4.90 & 9.45 & 19.37 \\
% Neg Grad & 0.75 & 2.47 & 2.41 \\
% Fisher & 14.65 & 49.30 & 106.76 \\
% L-CODEC & 2.25 & 4.61 & 9.06 \\
% \textbf{Certified} & 14.87 & 48.58 & 98.19 \\
% \hline
% \hline
% \end{tabular}
% \end{table}
Efficiency is a crucial advantage of certified unlearning compared with retraining.
We record the time cost in the unlearning stage for different unlearning baselines and exhibit the results in \cref{fig:efficiency}.
From the experimental results, we can observe that
(1). Negative gradient has the shortest unlearning time and retraining has the longest unlearning time.
(2). Certified unlearning has over 10 times speedup compared with exact unlearning (retrain) over DNNs.
(3). Negative gradient, fine-tuning, and L-CODEC have shorter unlearning time but less desirable unlearning performance than certified unlearning. 
% \begin{itemize}[leftmargin=*]
% \item Negative gradient has the shortest unlearning time and retraining has the longest unlearning time.
% \item Certified unlearning has over 10 times speed up compared with exact unlearning (retrain) over DNNs.
% \item Negative gradient, fine tune, and L-CODEC have shorter unlearning time but less unlearning effectiveness than certified unlearning. 
% \end{itemize}
In addition, an important advantage of certified unlearning is its efficiency in adjusting hyperparameters.
In practice, we should choose a proper group of hyperparameters considering the tradeoff between the utility of unlearning target data and the utility of predicting remaining data.
Although the prediction utility can be evaluated efficiently by prevalent utility metrics such as accuracy and F1-score, evaluating unlearning utility is much more costly.
For the unlearning methods without a certification, we can only obtain the unlearning utility by comparing the prediction utility with retrained models or launching membership inference attacks.
In contrast, for certified unlearning methods, the certification budgets can be an effective clue of the unlearning utility and we can adjust the hyperparameters efficiently corresponding to the certification budgets.

\begin{figure*}[t]
    \centering
    \subfigure[Grad Norm]{
    \includegraphics[width=0.25\linewidth]{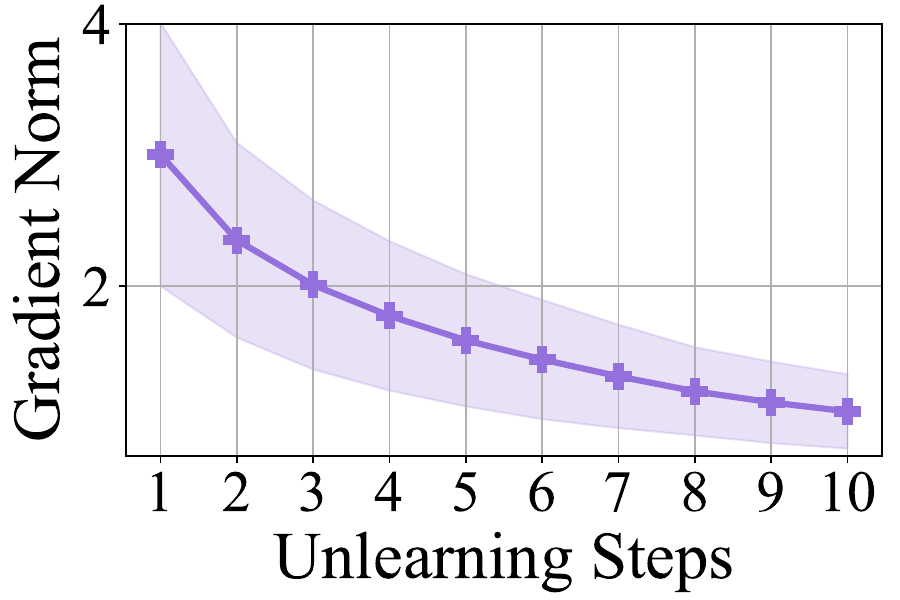}\label{fig:grad}}
    \subfigure[Error Bound]{
    \includegraphics[width=0.25\linewidth]{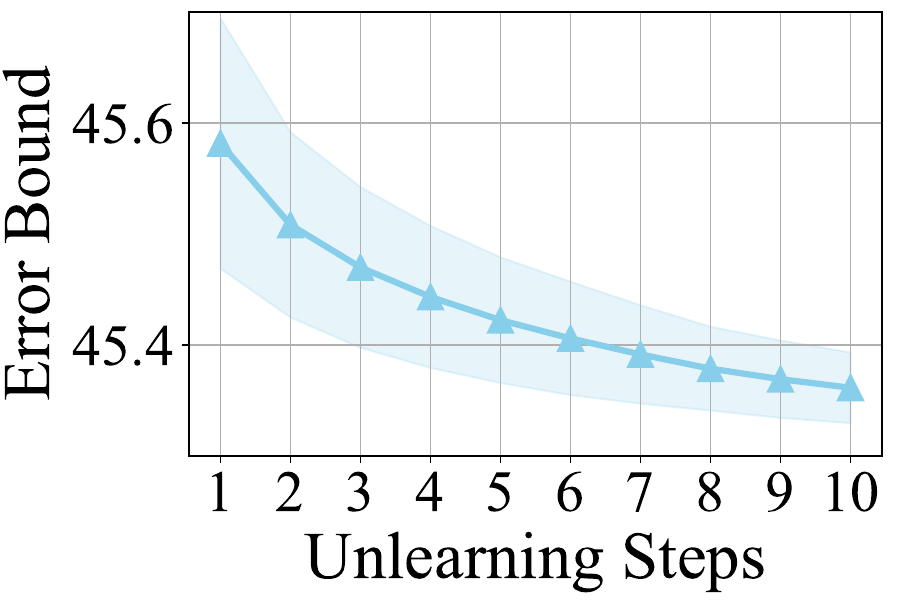}\label{fig:error}}
    \subfigure[Test F1-Score]{
    \includegraphics[width=0.25\linewidth]{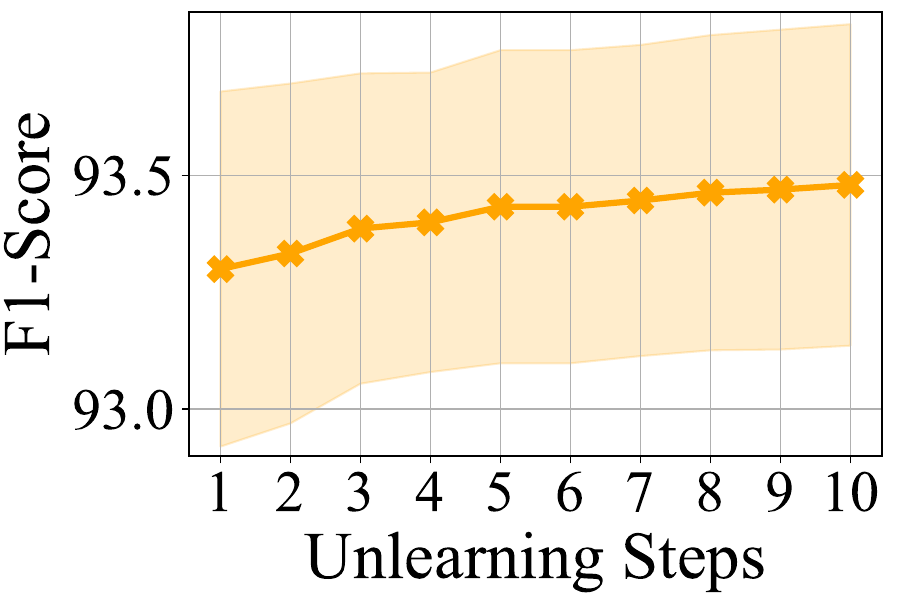}\label{fig:utility}}
    \vspace{-3mm}
    \caption{Gradient norm, approximation error bound, and model utility after each unlearning step.}
    \label{fig:sequential unlearning}
    \vspace{-3mm}
\end{figure*}

\subsection{Ablation Study}
Knowing that certified unlearning for DNNs is practical in real-world experiments, we conduct ablation studies to verify the effectiveness of adopted techniques separately.
Specifically, to verify the effectiveness of the local convex approximation, we remove the local convex approximation by setting $\lambda=0$.
We gradually increase the value of $\lambda$ and record the approximation error and the approximation error bound computed by \cref{eq:practical upper bound}.
Moreover, to verify the effectiveness of the constraint $\|\bm{w}\|_2\leq C$, we remove the constraint and record the approximation error under each $\lambda$.
To compute the approximation error, we compute the Euclidean distance of the unlearned model parameters and the retrained model parameters.
We present the experimental results in \cref{tab:ablation}.
We can observe from \cref{tab:ablation} that
(1). Local convex approximation can distinctly reduce the approximation error bound and slightly reduce the real approximation error as well. 
(2). Certified unlearning with the constraint $\|\bm{w}\|_2\leq C$ has a much lower approximation error (bound).
(3). There exists a potential of the approximation error bound to be further tightened.
% \begin{itemize}
% \item Local convex approximation can distinctly reduce the approximation error bound and slightly reduce the real approximation error as well. 
% \item Certified unlearning with the constraint $\|\bm{w}\|_2\leq C$ has a much lower approximation error (bound).
% \item The approximation error bound can be further tightened.
% \end{itemize}

In addition, we also remove the inverse Hessian approximation to verify its effect in reducing the time complexity.
In practice, we use \textit{torch.linalg.solve} function to compute the exact inverse matrix of the full Hessian instead of using the inverse Hessian approximation.
Due to the memory limitation, we only conduct the experiment with MLP on the MNIST dataset.
The results show that the inverse Hessian approximation brings \textit{470 times speedup} compared with computing the exact value of the inverse Hessian.

\subsection{Sequential Unlearning}
% \begin{figure}[t]
%     \centering
%     \includegraphics[width=0.6\linewidth]{images/sequential_unlearning.pdf}
%     \vspace{-2mm}
%     \caption{Approximation error bound and model utility after each unlearning process over the ResNet-18 backbone on SVHN.}
%     \label{fig:sequential unlearning}
% \end{figure}
We conduct experiments on certified unlearning in a sequential setting to verify its feasibility in practice.
In particular, we sequentially delete 10,000 training samples from a trained ResNet-18 model over the SVHN dataset in 10 iterations.
Experimental results are shown in \cref{fig:sequential unlearning}.
Note that each unlearning step can be seen as a single-step Newton update that reduces the loss value over the corresponding retain set. 
In the sequential setting, the estimation in each unlearning step reduces the loss value over the corresponding retain set. 
It is worth noting that the retained set in each step is a subset of the retained set in any previous step. 
Hence, the loss value over the retained set in the $k$-th unlearning step $D_{r_k}$ is reduced in each previous unlearning step. 
Subsequently, the gradient norm $||\nabla\mathcal{L}(\tilde{w}_{k-1},\mathcal{D}_{r_k})||$ also decreases as the model parameters approach a local optimum, so does the approximation error bound according to \Cref{eq:practical upper bound}.
% Therefore, we observe that the gradient norm ($\|\nabla\mathcal{L}(\tilde{\bm{w}}_{k-1},\mathcal{D}_{r_k})\|$) decreases and the utility increases as more samples are deleted.
% As the gradient norm decreases, the approximation error bound also decreases slightly.
As a result, in a sequential setting, we can still launch the certified unlearning method, while robustly maintaining the utility of the unlearned model.

\subsection{Parameter Study}\label{sec:parameter}
\begin{figure}[t]
    \centering
    \subfigure[Test F1-$\varepsilon$ curve]{
    \includegraphics[width=0.48\linewidth]{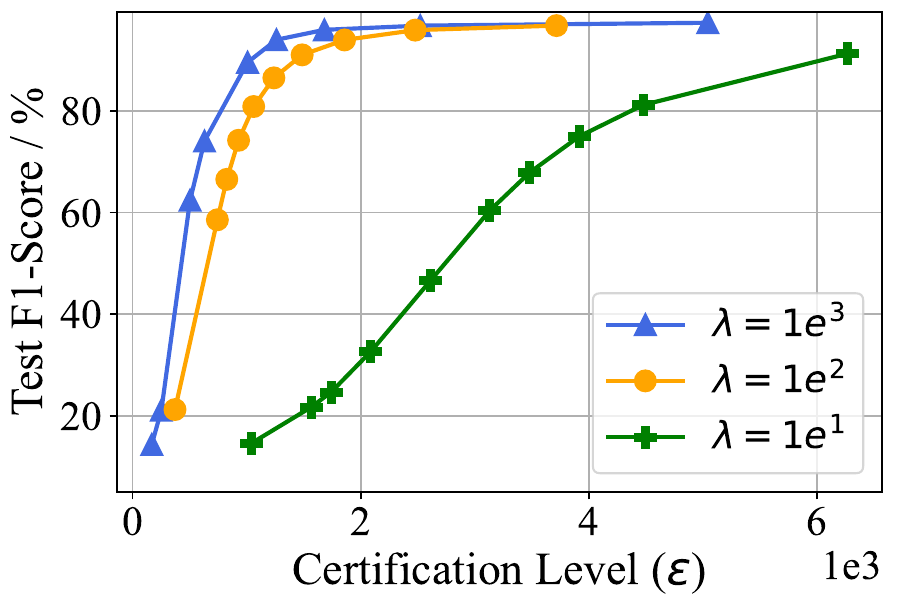}\label{fig:epsilon}}
    \subfigure[Test F1-$\delta$ curve]{
    \includegraphics[width=0.48\linewidth]{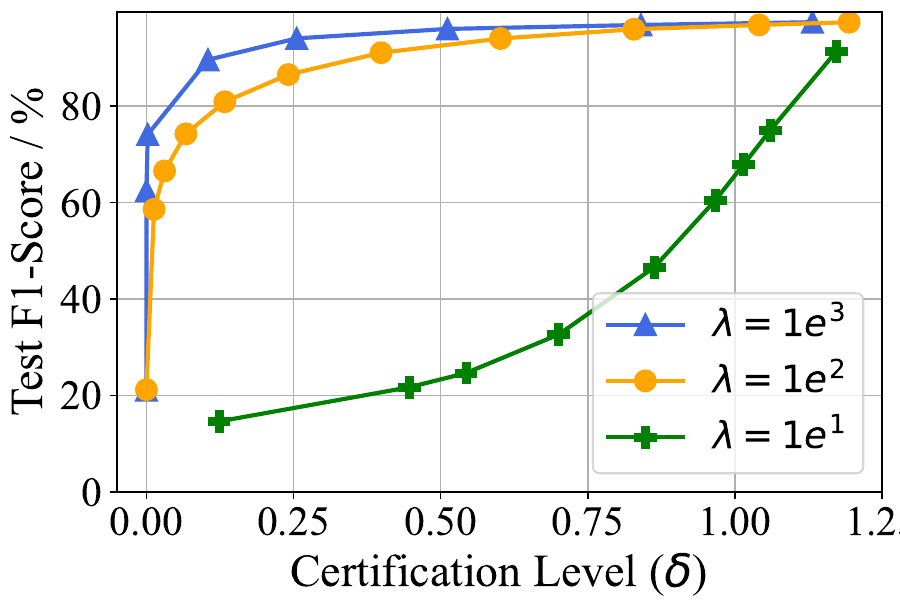}\label{fig:delta}}
    \vspace{-3mm}
    \caption{The effect of local convex coefficient $\lambda$ and certification budget $\varepsilon$ and $\delta$ over the MLP backbone on MNIST.}
    \label{fig:certification budget}
    \vspace{-5mm}
\end{figure}
In this experiment, we discuss the tradeoff between certification budgets ($\varepsilon$, $\delta$) and the utility of the unlearned model.
According to \cref{thm:certified unlearning}, to achieve a tighter certification budget, we have to add a larger noise, which can also lead to utility degradation and limit the practical application of certified unlearning.
To demonstrate the effects of the certification budgets $\varepsilon$ and $\delta$ on the unlearned model utility, we record the micro F1-score of the unlearned model over the test set when adding Gaussian noises with different standard deviations.
We repeat the experiment with different values of the local convex coefficient $\lambda$ and present the results in \cref{fig:certification budget}.
We fix the budget $\delta=0.1$ for \cref{fig:epsilon} and $\varepsilon=1e^3$ for \cref{fig:delta}.
From the experimental results, we can obtain that
(1). The unlearned model utility will decrease as the certification budget becomes tighter.
(2). Under a fixed certification budget, the model utility increases when increasing the value of the local convex coefficient $\lambda$.
(3). When increasing the value of the local convex coefficient $\lambda$, the model utility becomes more sensitive in terms of the certification level.
% \begin{itemize}[leftmargin=*]
% \item The unlearned model utility will decrease as the certification budget becomes tighter.
% \item Under a fixed certification budget, the model utility increases when increasing the value of the local convex coefficient $\lambda$.
% \item When increasing the value of local convex coefficient $\lambda$, the model utility becomes more sensitive in terms of the certification level.
% \end{itemize}
It is worth noting that the utility increase indicates a smaller noise added in the unlearning stage.
A smaller noise under a fixed certification level further indicates a lower approximation error.
Therefore, \cref{fig:certification budget} also verifies the effectiveness of local convex approximation in mitigating the approximation error bound.
Although the certification budgets in \cref{fig:certification budget} cannot decrease to a very small value when maintaining a desirable utility, our certified unlearning outperforms most unlearning baselines in various unlearning metrics even with a relatively loose budget according to \cref{tab:performance} and \cref{tab:unlearn metric}, which verifies the stringency of certified unlearning. 
Meanwhile, the potential to decrease the certification budget also necessitates further studies on tightening the approximation error bound under nonconvex neural models.

\section{Related Works}
\subsection{Exact Unlearning}
Exact unlearning is a straightforward method for machine unlearning.
The idea of exact unlearning is to find efficient ways to retrain the original model from scratch.
From the beginning of machine unlearning, exact unlearning has been widely studied for statistical query learning~\citep{cao2015towards}, K-Means~\citep{ginart2019making}, random forest~\citep{brophy2021machine}, deep neural networks~\citep{bourtoule2021machine,aldaghri2021coded,kim2022efficient}, and graph neural networks~\citep{chen2022graph}.
Despite having desirable unlearning effectiveness, exact unlearning methods cannot scale to large stochastic models or tackle with batch unlearning settings~\citep{nguyen2022survey}.

\subsection{Certified Unlearning}
Certified unlearning was proposed based on the definition of differential privacy.
The idea of certified unlearning is to find an approximation model similar to the retrained model in distribution by computationally efficient methods.
Existing certified unlearning methods focus primarily on linear models~\citep{guo2020certified,izzo2021approximate,mahadevan2021certifiable}, general convex models~\citep{ullah2021machine,sekhari2021remember,neel2021descent}, Bayesian models~\citep{nguyen2020variational}, sum-product networks~\citep{becker2022certified}, and graph neural networks~\citep{pan2023unlearning,chien2023efficient,wu2023gif,wu2023certified,dong2024idea}.
It is worth noting that most aforementioned certified unlearning methods are based on Newton update, i.e., influence function~\citep{koh2017understanding}, to estimate the retrained model.
Some existing works~\citep{mehta2022deep} took a tentative step towards certified unlearning for DNNs, relying on strong assumptions on the convexity of the objective.
% In this paper, we propose a certified unlearning method based on the Newton update without relying on convexity.

\section{Conclusion}
Despite the extensive study of certified unlearning in many convex machine learning models, the application of certified unlearning in DNNs still poses challenges.
In this paper, we proposed two simple techniques to extend certified unlearning to nonconvex objectives and incorporated an inverse Hessian approximation approach to improve efficiency.
Regarding the real-world scenarios, we also provide the theoretical analysis of the certification under nonconvergence training and sequential unlearning settings.
% We also incorporated an inverse Hessian approximation method to improve efficiency and extended certified unlearning to a sequential setting, while preserving the certification guarantee.
We conducted extensive empirical experiments to verify the efficacy of our proposed methods and the superiority of certified unlearning in efficiently deleting the information and protecting the privacy of unlearned data.
% We also exhibit the superiority of certified unlearning in preserving data privacy and efficiently balancing the unlearning and prediction utilities.
% For future work, our results shed light on several research questions that deserve further investigation.
% (1) How to tighten the approximation error bound? (2) How to improve the single-step Newton update for decreasing the approximation error? (3) How to improve the Newton update to estimate the retrained model with early stopping?

% \section*{Accessibility}
% Authors are kindly asked to make their submissions as accessible as possible for everyone including people with disabilities and sensory or neurological differences.
% Tips of how to achieve this and what to pay attention to will be provided on the conference website \url{http://icml.cc/}.

% \section*{Software and Data}

% If a paper is accepted, we strongly encourage the publication of software and data with the
% camera-ready version of the paper whenever appropriate. This can be
% done by including a URL in the camera-ready copy. However, \textbf{do not}
% include URLs that reveal your institution or identity in your
% submission for review. Instead, provide an anonymous URL or upload
% the material as ``Supplementary Material'' into the OpenReview reviewing
% system. Note that reviewers are not required to look at this material
% when writing their review.

% Acknowledgements should only appear in the accepted version.
\section*{Acknowledgements}

This work is supported in part by the National Science Foundation under grants (IIS-2006844, IIS-2144209, IIS-2223769, CNS-2154962, CNS-2213700, and BCS-2228534), the Commonwealth Cyber Initiative Awards under grants (VV-1Q23-007, HV-2Q23-003, and VV-1Q24-011), the JP Morgan Chase Faculty Research Award, and the Cisco Faculty Research Award.

\section*{Impact Statement}

This paper extends the certification of machine unlearning to nonconvex, nonconvergence, and sequential settings in real-world scenarios. 
The goal of this work is to advance the field of Machine Learning. 
There are many potential societal consequences of our work, none of which we feel must be specifically highlighted here.

% In the unusual situation where you want a paper to appear in the
% references without citing it in the main text, use \nocite
% \nocite{langley00}

% \balance

\bibliography{icml2024}
\bibliographystyle{icml2024}

%%%%%%%%%%%%%%%%%%%%%%%%%%%%%%%%%%%%%%%%%%%%%%%%%%%%%%%%%%%%%%%%%%%%%%%%%%%%%%%
%%%%%%%%%%%%%%%%%%%%%%%%%%%%%%%%%%%%%%%%%%%%%%%%%%%%%%%%%%%%%%%%%%%%%%%%%%%%%%%
% APPENDIX
%%%%%%%%%%%%%%%%%%%%%%%%%%%%%%%%%%%%%%%%%%%%%%%%%%%%%%%%%%%%%%%%%%%%%%%%%%%%%%%
%%%%%%%%%%%%%%%%%%%%%%%%%%%%%%%%%%%%%%%%%%%%%%%%%%%%%%%%%%%%%%%%%%%%%%%%%%%%%%%
\newpage
\appendix
\onecolumn
\section{Proofs}\label{sec:proof}

\subsection{Proof of \cref{pro:relationship}}

\begin{proof}
Let the unlearning process be an identical map in terms of the model, i.e., $\mathcal{U}(\mathcal{D},\mathcal{D}_u,\mathcal{A}(\mathcal{D}))=\mathcal{A}(\mathcal{D})$.
Since $\mathcal{A}$ is a differentially private algorithm, we have $\forall\;\mathcal{T}\subseteq\mathcal{H}$,
\begin{equation}
\mathrm{Pr}(\mathcal{A}(\mathcal{D})\in\mathcal{T})\leq e^{\varepsilon}\mathrm{Pr}(\mathcal{A}(\mathcal{D}_r)\in\mathcal{T})+\delta.
\end{equation}
Similarly, we also have $\forall\;\mathcal{T}\subseteq\mathcal{H}$,
\begin{equation}
\mathrm{Pr}(\mathcal{A}(\mathcal{D}_r)\in\mathcal{T})\leq e^{\varepsilon}\mathrm{Pr}(\mathcal{A}(\mathcal{D})\in\mathcal{T})+\delta.
\end{equation}
According to \cref{def:certified unlearning}, $\mathcal{A}(\mathcal{D})$ is an $\varepsilon-\delta$ certified unlearned model under our defined unlearning process $\mathcal{U}$. 
\end{proof}

\subsection{Proof of \cref{thm:certified unlearning}}

\begin{proof}
% According to \cref{lmm:differential privacy}, we obtain that $\mathcal{U}$ is $\varepsilon-\delta$ differentially private.
% According to ~\cref{pro:relationship}, $\mathcal{U}(\bm{w}^*,\mathcal{D}_u,\mathcal{D})=\tilde{\bm{w}}+Y$ is an $\varepsilon-\delta$ certified unlearning process.
Refer to the proof of Lemma 10 in~\citet{sekhari2021remember}.
\end{proof}

% \begin{lemma}\label{lmm:differential privacy}
% Let $f$ be an arbitrary $d$-dimensional function, define its $\ell_2$ sensitivity as $\Delta_2f=\mathrm{max}_{\text{adjacent}\,x,x'}\|f(x)-f(x')\|_2$. For $c^2>2\mathrm{ln}(1.25/\delta)$, the Gaussian mechanism, adding noise scaled to $\mathcal{N}(0,\sigma^2)$ to each of the $d$ components of the output, with the parameter $\sigma>\frac{c\Delta_2f}{\varepsilon}$, is $(\varepsilon,\delta)$ differentially private.
% \end{lemma}
% \begin{proof}
% Refer to the proof of Theorem A.1~\citep{dwork2014algorithmic}.
% \end{proof}

\subsection{Proof of \cref{lmm:error bound}}

\begin{proof}
First, we have
\begin{equation}\label{eq:substitute w tilde}
\begin{aligned}
\tilde{\bm{w}}-\tilde{\bm{w}}^*&=\bm{w}^*-\bm{H}_{\bm{w}^*}^{-1}\nabla\mathcal{L}(\bm{w}^*,\mathcal{D}_r)-\tilde{\bm{w}}^*.
\end{aligned}
\end{equation}
Considering that $\tilde{\bm{w}}^*=\mathrm{argmin}_{\bm{w}\in\mathcal{H}}\mathcal{L}(\bm{w},\mathcal{D}_r)$, we have $\nabla\mathcal{L}(\tilde{\bm{w}}^*,\mathcal{D}_r)=0$.
By substitute this equation into \cref{eq:substitute w tilde} we have
\begin{equation}\label{eq:variant}
\begin{aligned}
\tilde{\bm{w}}-\tilde{\bm{w}}^*=\bm{w}^*-\tilde{\bm{w}}^*-\bm{H}_{\bm{w}^*}^{-1}\left(\nabla\mathcal{L}(\bm{w}^*,\mathcal{D}_r)-\nabla\mathcal{L}(\tilde{\bm{w}}^*,\mathcal{D}_r)\right).
\end{aligned}
\end{equation}
According to \cref{asp:continuity2}, the loss function $\mathcal{L}$ has the second-order derivative. 
Consequently, according to the fundamental theorem of calculus, we have
\begin{equation}\label{eq:calculus}
\begin{aligned}
\nabla\mathcal{L}(\bm{w}^*,\mathcal{D}_r)-\nabla\mathcal{L}(\tilde{\bm{w}}^*,\mathcal{D}_r)=\int_0^1{\bm{H}_{\bm{w}^*+t(\tilde{\bm{w}}^*-\bm{w}^*)}(\tilde{\bm{w}}^*-\bm{w}^*)}dt.
\end{aligned}
\end{equation}
By incorporate \cref{eq:calculus} into \cref{eq:variant}, we have
\begin{equation}\label{eq:integral}
\begin{aligned}
\tilde{\bm{w}}-\tilde{\bm{w}}^*&=\bm{H}_{\bm{w}^*}^{-1}\left(\bm{H}_{\bm{w}^*}(\bm{w}^*-\tilde{\bm{w}}^*)+\int_0^1{\bm{H}_{\bm{w}^*+t(\tilde{\bm{w}}^*-\bm{w}^*)}(\tilde{\bm{w}}^*-\bm{w}^*)}dt\right) \\
&=\bm{H}_{\bm{w}^*}^{-1}\cdot\int_0^1{(\bm{H}_{\bm{w}^*}-\bm{H}_{\bm{w}^*+t(\tilde{\bm{w}}^*-\bm{w}^*)})(\bm{w}^*-\tilde{\bm{w}}^*)}dt.
\end{aligned}
\end{equation}
We compute the norm of both sides and obtain the right-hand side as
\begin{equation}\label{eq:right hand side}
\begin{aligned}
&\norm{\bm{H}_{\bm{w}^*}^{-1}\cdot\int_0^1(\bm{H}_{\bm{w}^*}-\bm{H}_{\bm{w}^*+t(\tilde{\bm{w}}^*-\bm{w}^*)})(\bm{w}^*-\tilde{\bm{w}}^*)dt} \\
\leq&\|\bm{H}_{\bm{w}^*}^{-1}\|\cdot\int_0^1\|\bm{H}_{\bm{w}^*}-\bm{H}_{\bm{w}^*+t(\tilde{\bm{w}}^*-\bm{w}^*)}\|\cdot\|\bm{w}^*-\tilde{\bm{w}}^*\|dt.
\end{aligned}
\end{equation}
According to \cref{asp:continuity2}, the loss function $l$ has an $M$-Lipschitz Hessian.
Hence, the loss function $\mathcal{L}$ also has an $M$-Lipschitz Hessian.
Consequently, we have
\begin{equation}\label{eq:Lipschitz Hessian}
\begin{aligned}
\|\bm{H}_{\bm{w}^*}-\bm{H}_{\bm{w}^*+t(\tilde{\bm{w}}^*-\bm{w}^*)}\|\leq Mt\|\bm{w}^*-\tilde{\bm{w}}^*\|.
\end{aligned}
\end{equation}
Incorporating \cref{eq:Lipschitz Hessian} into \cref{eq:right hand side} leads to
\begin{equation}\label{eq:rhs final}
\begin{aligned}
&\norm{\bm{H}_{\bm{w}^*}^{-1}\cdot\int_0^1(\bm{H}_{\bm{w}^*}-\bm{H}_{\bm{w}^*+t(\tilde{\bm{w}}^*-\bm{w}^*)})(\bm{w}^*-\tilde{\bm{w}}^*)dt} \\
\leq&\|\bm{H}_{\bm{w}^*}^{-1}\|\cdot\int_0^1Mt\|\bm{w}^*-\tilde{\bm{w}}^*\|^2dt \\
=&\frac{M}{2}\|\bm{H}_{\bm{w}^*}^{-1}\|\cdot\|\bm{w}^*-\tilde{\bm{w}}^*\|^2.
\end{aligned}
\end{equation}
To finish the proof, we can incorporate \cref{eq:rhs final} into \cref{eq:integral} and let the norm be the $\ell$-2 norm.

\end{proof}

\subsection{Proof of \cref{thm:upper bound}}

\begin{proof}
Following the proof of \cref{lmm:error bound}, we have
\begin{equation}\label{eq:variant local}
\begin{aligned}
\tilde{\bm{w}}-\tilde{\bm{w}}^*=\bm{w}^*-\tilde{\bm{w}}^*-(\bm{H}_{\bm{w}^*}+\lambda\bm{I})^{-1}(\nabla\mathcal{L}(\bm{w}^*,\mathcal{D}_r)-\nabla\mathcal{L}\left(\tilde{\bm{w}}^*,\mathcal{D}_r)\right).
\end{aligned}
\end{equation}
Based on \cref{asp:continuity2}, \cref{eq:calculus} still holds.
Incorporating \cref{eq:calculus} into \cref{eq:variant local} leads to
\begin{equation}\label{eq:integral local}
\begin{aligned}
\tilde{\bm{w}}-\tilde{\bm{w}}^*=(\bm{H}_{\bm{w}^*}+\lambda\bm{I})^{-1}\cdot\int_0^1{(\bm{H}_{\bm{w}^*}+\lambda\bm{I}-\bm{H}_{\bm{w}^*+t(\tilde{\bm{w}}^*-\bm{w}^*)})(\bm{w}^*-\tilde{\bm{w}}^*)}dt.
\end{aligned}
\end{equation}
Then, we compute the norm of both sides and obtain the right-hand side as
\begin{equation}\label{eq:rhs local}
\begin{aligned}
&\norm{(\bm{H}_{\bm{w}^*}+\lambda\bm{I})^{-1}\cdot\int_0^1{(\bm{H}_{\bm{w}^*}+\lambda\bm{I}-\bm{H}_{\bm{w}^*+t(\tilde{\bm{w}}^*-\bm{w}^*)})(\bm{w}^*-\tilde{\bm{w}}^*)}dt} \\
\leq&\|(\bm{H}_{\bm{w}^*}+\lambda\bm{I})^{-1}\|\cdot\int_0^1{\left(\lambda+\|\bm{H}_{\bm{w}^*}-\bm{H}_{\bm{w}^*+t(\tilde{\bm{w}}^*-\bm{w}^*)}\|\right)\cdot\|\bm{w}^*-\tilde{\bm{w}}^*\|}dt.
\end{aligned}
\end{equation}
According to \cref{asp:continuity2}, \cref{eq:Lipschitz Hessian} still holds. 
By incorporating \cref{eq:Lipschitz Hessian} into \cref{eq:rhs local}, we have
\begin{equation}\label{eq:rhs final local}
\begin{aligned}
&\norm{(\bm{H}_{\bm{w}^*}+\lambda\bm{I})^{-1}\cdot\int_0^1{(\bm{H}_{\bm{w}^*}+\lambda\bm{I}-\bm{H}_{\bm{w}^*+t(\tilde{\bm{w}}^*-\bm{w}^*)})(\bm{w}^*-\tilde{\bm{w}}^*)}dt} \\
\leq&\|(\bm{H}_{\bm{w}^*}+\lambda\bm{I})^{-1}\|\cdot\int_0^1Mt\|\bm{w}^*-\tilde{\bm{w}}^*\|^2+\lambda\|\bm{w}^*-\tilde{\bm{w}}^*\|dt \\
=&\left(\frac{M}{2}\|\bm{w}^*-\tilde{\bm{w}}^*\|+\lambda\right)\|(\bm{H}_{\bm{w}^*}+\lambda\bm{I})^{-1}\|\cdot\|\bm{w}^*-\tilde{\bm{w}}^*\|.
\end{aligned}
\end{equation}
For the inverse Hessian term, We then have
% \begin{equation}
% \begin{aligned}
% (\bm{H}_{\bm{w}^*}+\lambda\bm{I})^{-1}&=\frac{1}{\lambda}\left(\bm{I}+\frac{\bm{H}_{\bm{w}^*}}{\lambda}\right)^{-1} \\
% &=\frac{1}{\lambda}\sum_{i=0}^{\infty}\left(-\frac{\bm{H}_{\bm{w}^*}}{\lambda}\right)^i,
% \end{aligned}
% \end{equation}
% where $\lambda>\|\bm{H}_{\bm{w}^*}\|$.
% Then, we compute the norm of both sides and have
\begin{equation}\label{eq:inverse norm}
\begin{aligned}
\|(\bm{H}_{\bm{w}^*}+\lambda\bm{I})^{-1}\|_2&=\sqrt{\lambda_{max}[((\bm{H}_{\bm{w}^*}+\lambda\bm{I})^{-1})^\top(\bm{H}_{\bm{w}^*}+\lambda\bm{I})^{-1}]} \\ 
&=\sqrt{\lambda_{max}[((\bm{H}_{\bm{w}^*}+\lambda\bm{I})(\bm{H}_{\bm{w}^*}+\lambda\bm{I})^\top)^{-1}]} \\ 
&=\frac{1}{\sqrt{\lambda_{min}[(\bm{H}_{\bm{w}^*}+\lambda\bm{I})(\bm{H}_{\bm{w}^*}+\lambda\bm{I})^\top]}} \\ 
&=\frac{1}{\lambda_{min}[\bm{H}_{\bm{w}^*}+\lambda\bm{I}]} \\ 
&=\frac{1}{\lambda+\lambda_{min}[\bm{H}_{\bm{w}^*}]},
\end{aligned}
\end{equation}
where $\lambda_{min}[\bm{H}_{\bm{w}^*}]$ and $\lambda_{max}[\bm{H}_{\bm{w}^*}]$ denote the minimum and maximum eigenvalue of the Hessian $\bm{H}_{\bm{w}^*}$.
% \begin{equation}\label{eq:inverse norm}
% \begin{aligned}
% \|(\bm{H}_{\bm{w}^*}+\lambda\bm{I})^{-1}\|&=\norm{\frac{1}{\lambda}\sum_{i=0}^{\infty}\left(-\frac{\bm{H}_{\bm{w}^*}}{\lambda}\right)^i} \\
% &\leq\frac{1}{\lambda}\sum_{i=0}^{\infty}\left(\frac{\|\bm{H}_{\bm{w}^*}\|}{\lambda}\right)^i \\
% &=\frac{1}{\lambda-\|\bm{H}_{\bm{w}^*}\|},
% \end{aligned}
% \end{equation}
% where $\lambda>\|\bm{H}_{\bm{w}^*}\|$.
In addition, we also have $\|\bm{w}^*\|_2\leq C$ and $\|\tilde{\bm{w}}^*\|_2\leq C$.
Consequently, we have
\begin{equation}\label{eq:constraint norm}
\begin{aligned}
\|\bm{w}^*-\tilde{\bm{w}}^*\|_2\leq 2C.
\end{aligned}
\end{equation}
To finish the proof, we can incorporate \cref{eq:inverse norm}, \cref{eq:constraint norm}, and \cref{eq:rhs final local} into \cref{eq:integral local} and let the norm be the $\ell$-2 norm.

\end{proof}

% \subsection{Proof of Proposition 3}
% \begin{proposition}\label{pro:convex upper bound}
% Let $\bm{w}^*=\mathrm{argmin}_{\|\bm{w}\|_2\leq C}\mathcal{L}(\bm{w},\mathcal{D})$ and $\tilde{\bm{w}}^*=\mathrm{argmin}_{\|\bm{w}\|_2\leq C}\mathcal{L}(\bm{w},\mathcal{D}_r)$.
% Let $\tilde{\bm{w}}=\bm{w}^*-\bm{H}_{\bm{w}^*}^{-1}\nabla\mathcal{L}(\bm{w}^*,\mathcal{D}_r)$ be an approximation of $\Tilde{\bm{w}}^*$.
% Consider Assumption~\ref{asp:continuity2} and we assume $l(\bm{w},\bm{x})$ to be $K$-strongly convex with respect to $\bm{w}$, then we have
% \begin{equation}\label{eq:convex upper bound}
% \|\tilde{\bm{w}}-\tilde{\bm{w}}^*\|_2\leq\frac{2MC^2}{K}.
% \end{equation}
% \end{proposition}
% \begin{proof}
% As the loss function $l$ is $K$-strongly convex, the loss function $\mathcal{L}$ is also $K$-strongly convex.
% Consequently, we have $\|\bm{H}_{\bm{w}^*}\|_2\geq K$ and $\|\bm{H}_{\bm{w}^*}^{-1}\|_2\leq\frac{1}{K}$.
% In addition, Eq.~(\ref{eq:constraint norm}) still holds.
% By incorporating the upper bound $\|\bm{H}_{\bm{w}^*}^{-1}\|_2\leq\frac{1}{K}$ and Eq.~(\ref{eq:constraint norm}) into Proposition~\ref{pro:error bound} we can obtain Eq.~(\ref{eq:convex upper bound}).
% \end{proof}

\subsection{Proof of \cref{pro:Hessian estimator}}

\begin{proof}
We compute the expectation value for both sides of \cref{eq:Hessian estimator}.
Considering $\{\bm{H}_{1,\lambda},\dots,\bm{H}_{s,\lambda}\}$ are i.i.d. samples from the Hessian $\bm{H}_{\bm{w}^*}+\lambda\bm{I}$, we have
\begin{equation}\label{eq:exp estimator}
\begin{aligned}
\mathbb{E}[\tilde{\bm{H}}^{-1}_{t,\lambda}]=&\mathbb{E}\left[\bm{I}+\left(\bm{I}-\frac{\bm{H}_{t,\lambda}}{H}\right)\tilde{\bm{H}}^{-1}_{t-1,\lambda}\right] \\
=&\bm{I}+\mathbb{E}[\tilde{\bm{H}}^{-1}_{t-1,\lambda}]-\frac{1}{H}\mathbb{E}[\bm{H}_{t,\lambda}\tilde{\bm{H}}^{-1}_{t-1,\lambda}] \\
=&\bm{I}+\mathbb{E}[\tilde{\bm{H}}^{-1}_{t-1,\lambda}]-\frac{1}{H}\mathbb{E}[\bm{H}_{\bm{w}^*}+\lambda\bm{I}]\mathbb{E}[\tilde{\bm{H}}^{-1}_{t-1,\lambda}].
\end{aligned}
\end{equation}
Let $s\rightarrow\infty$, the limit $\mathbb{E}[\tilde{\bm{H}}^{-1}_{\infty,\lambda}]=\lim_{t\rightarrow\infty}\mathbb{E}[\tilde{\bm{H}}^{-1}_{t,\lambda}]$ exists as $\|\frac{\mathbb{E}[\bm{H}_{t,\lambda}]}{H}\|\leq 1$. We then compute the limit for both sides of \cref{eq:exp estimator} and have
\begin{equation}\label{eq:lim exp estimator}
\begin{aligned}
\mathbb{E}[\tilde{\bm{H}}^{-1}_{\infty,\lambda}]=&\bm{I}+\mathbb{E}[\tilde{\bm{H}}^{-1}_{\infty,\lambda}]-\frac{1}{H}\mathbb{E}[\bm{H}_{\bm{w}^*}+\lambda\bm{I}]\mathbb{E}[\tilde{\bm{H}}^{-1}_{\infty,\lambda}].
\end{aligned}
\end{equation}
Consequently, we have
\begin{equation}
\begin{aligned}
\mathbb{E}\left[\frac{\tilde{\bm{H}}^{-1}_{\infty,\lambda}}{H}\right]=&\mathbb{E}[(\bm{H}_{\bm{w}^*}+\lambda\bm{I})^{-1}].
\end{aligned}
\end{equation}
Hence, we have proved that $\frac{\tilde{\bm{H}}^{-1}_{s,\lambda}}{H}$ is an asymptotic unbiased estimator of the inverse Hessian $(\bm{H}_{\bm{w}^*}+\lambda\bm{I})^{-1}$.

\end{proof}

\subsection{Proof of \cref{thm:efficient upper bound}}

\begin{proof}
Following the proof of \cref{lmm:error bound}, we have
\begin{equation}\label{eq:variant efficient}
\begin{aligned}
\tilde{\bm{w}}-\tilde{\bm{w}}^*&=\bm{w}^*-\tilde{\bm{w}}^*+\frac{n_u}{(n-n_u)H}\tilde{\bm{H}}^{-1}_{s,\lambda}\nabla\mathcal{L}(\bm{w}^*,\mathcal{D}_u) \\
&=\bm{w}^*-\tilde{\bm{w}}^*-\frac{\tilde{\bm{H}}^{-1}_{s,\lambda}}{H}\nabla\mathcal{L}(\bm{w}^*,\mathcal{D}_r) \\
&=\bm{w}^*-\tilde{\bm{w}}^*-\frac{\tilde{\bm{H}}^{-1}_{s,\lambda}}{H}(\nabla\mathcal{L}(\bm{w}^*,\mathcal{D}_r)-\nabla\mathcal{L}\left(\tilde{\bm{w}}^*,\mathcal{D}_r)\right) \\
&=\bm{w}^*-\tilde{\bm{w}}^*-\left((\bm{H}_{\bm{w}^*}+\lambda\bm{I})^{-1}+\frac{\tilde{\bm{H}}^{-1}_{s,\lambda}}{H}-(\bm{H}_{\bm{w}^*}+\lambda\bm{I})^{-1}\right)\cdot\left(\nabla\mathcal{L}(\bm{w}^*,\mathcal{D}_r)-\nabla\mathcal{L}(\tilde{\bm{w}}^*,\mathcal{D}_r)\right).
\end{aligned}
\end{equation}
The right-hand side of \cref{eq:variant efficient} can be divided into two parts, (1). $\bm{w}^*-\tilde{\bm{w}}^*-(\bm{H}_{\bm{w}^*}+\lambda\bm{I})^{-1}(\nabla\mathcal{L}(\bm{w}^*,\mathcal{D}_r)-\nabla\mathcal{L}(\tilde{\bm{w}}^*,\mathcal{D}_r))$ and (2). $((\bm{H}_{\bm{w}^*}+\lambda\bm{I})^{-1}-\tilde{\bm{H}}^{-1}_{s,\lambda}/H)(\nabla\mathcal{L}(\bm{w}^*,\mathcal{D}_r)-\nabla\mathcal{L}(\tilde{\bm{w}}^*,\mathcal{D}_r))$.
We compute the norm for both sides of Eq.~(\ref{eq:variant efficient}), and the norm of the right-hand side is smaller than the summation of the norm of the part (1) and the norm of the part (2).
To find an upper bound of $\|\tilde{\bm{w}}-\tilde{\bm{w}}^*\|$, we can find upper bounds for the norm of part (1) and the norm of part (2).
According to \cref{thm:upper bound}, the norm of part (1) is bounded by $\frac{2C(MC+\lambda)}{\lambda+\lambda_{min}}$.
Next, our goal is to find the upper bound of the norm of part (2).
In particular, the norm of part (2) is smaller than the product $\|(\bm{H}_{\bm{w}^*}+\lambda\bm{I})^{-1}-\tilde{\bm{H}}^{-1}_{s,\lambda}/H\|\cdot\|\nabla\mathcal{L}(\bm{w}^*,\mathcal{D}_r)-\nabla\mathcal{L}(\tilde{\bm{w}}^*,\mathcal{D}_r)\|$.
Refer to Lemma 3.6 in~\citep{agarwal2016second}, and we have
\begin{equation}\label{eq:prob bound}
\norm{(\bm{H}_{\bm{w}^*}+\lambda\bm{I})^{-1}-\frac{\tilde{\bm{H}}^{-1}_{s,\lambda}}{H}}>
\left(16\sqrt{\mathrm{ln}\,d/\rho}\cdot\frac{\lambda+L}{\lambda+\lambda_{min}}+\frac{1}{16}\right),
% \left(\frac{16\sqrt{\mathrm{ln}\,d/\rho}}{\lambda+\lambda_{min}}+\frac{1}{16}\right),
\end{equation}
with a probability smaller than $\rho$.
According to \cref{asp:continuity1}, we also have
\begin{equation}\label{eq:continuity bound}
\begin{aligned}
\|\nabla\mathcal{L}(\bm{w}^*,\mathcal{D}_r)-\nabla\mathcal{L}(\tilde{\bm{w}}^*,\mathcal{D}_r)\|\leq L\|\bm{w}^*-\tilde{\bm{w}}^*\|\leq 2LC.
\end{aligned}
\end{equation}
Then, we incorporate \cref{eq:prob bound} and \cref{eq:continuity bound} and obtain the upper bound of the norm of part (2): 
$\left(32\sqrt{\mathrm{ln}\,d/\rho}\cdot\frac{\lambda+L}{\lambda+\lambda_{min}}+\frac{1}{8}\right)LC$
% $\left(\frac{32\sqrt{\mathrm{ln}\,d/\rho}}{\lambda+\lambda_{min}}+\frac{1}{8}\right)LC$.
To finish the proof, we can combine the upper bounds for the norm of the part (1) and the norm of the part (2) and then obtain \cref{eq:efficient upper bound}.

\end{proof}

\subsection{Proof of \cref{pro:practical upper bound}}

\begin{proof}
Following the proof of \cref{lmm:error bound}, we have
\begin{equation}\label{eq:variant practical}
\begin{aligned}
\tilde{\bm{w}}-\tilde{\bm{w}}^*&=\bm{w}^*-\tilde{\bm{w}}^*+\frac{n_u}{(n-n_u)H}\tilde{\bm{H}}^{-1}_{s,\lambda}\nabla\mathcal{L}(\bm{w}^*,\mathcal{D}_u) \\
&=\bm{w}^*-\tilde{\bm{w}}^*-\frac{\tilde{\bm{H}}^{-1}_{s,\lambda}}{H}\nabla\mathcal{L}(\bm{w}^*,\mathcal{D}_r) \\
&=\bm{w}^*-\tilde{\bm{w}}^*-\frac{\tilde{\bm{H}}^{-1}_{s,\lambda}}{H}\left(\nabla\mathcal{L}(\bm{w}^*,\mathcal{D}_r)-\nabla\mathcal{L}(\tilde{\bm{w}}^*,\mathcal{D}_r)\right)-\frac{\tilde{\bm{H}}^{-1}_{s,\lambda}}{H}\nabla\mathcal{L}(\tilde{\bm{w}}^*,\mathcal{D}_r).
\end{aligned}
\end{equation}
We compute the norm for both sides of \cref{eq:variant practical} and have
\begin{equation}\label{eq:practical inequality}
\begin{aligned}
\|\tilde{\bm{w}}-\tilde{\bm{w}}^*\|\leq\norm{\bm{w}^*-\tilde{\bm{w}}^*-\frac{\tilde{\bm{H}}^{-1}_{s,\lambda}}{H}(\nabla\mathcal{L}(\bm{w}^*,\mathcal{D}_r)-\nabla\mathcal{L}(\tilde{\bm{w}}^*,\mathcal{D}_r))}+\norm{\frac{\tilde{\bm{H}}^{-1}_{s,\lambda}}{H}\nabla\mathcal{L}(\tilde{\bm{w}}^*,\mathcal{D}_r)}.
\end{aligned}
\end{equation}
To find an upper bound of $\|\tilde{\bm{w}}-\tilde{\bm{w}}^*\|$, we can find an upper bound for each norm value on the right-hand side.
According to the proof of \cref{thm:efficient upper bound}, we have $\norm{\bm{w}^*-\tilde{\bm{w}}^*-\frac{\tilde{\bm{H}}^{-1}_{s,\lambda}}{H}(\nabla\mathcal{L}(\bm{w}^*,\mathcal{D}_r)-\nabla\mathcal{L}(\tilde{\bm{w}}^*,\mathcal{D}_r))}$ is upper bounded by $\frac{2C(MC+\lambda)}{\lambda+\lambda_{min}}+\left(32\sqrt{\mathrm{ln}\,d/\rho}\cdot\frac{\lambda+L}{\lambda+\lambda_{min}}+\frac{1}{8}\right)LC$ with a probability larger than $1-\rho$.
Next, our goal is to find the upper bound for $\norm{\frac{\tilde{\bm{H}}^{-1}_{s,\lambda}}{H}\nabla\mathcal{L}(\tilde{\bm{w}}^*,\mathcal{D}_r)}$. 
In particular, we have
\begin{equation}\label{eq:bound term practical}
\begin{aligned}
\norm{\frac{\tilde{\bm{H}}^{-1}_{s,\lambda}}{H}\nabla\mathcal{L}(\tilde{\bm{w}}^*,\mathcal{D}_r)}&\leq\|(\bm{H}_{\bm{w}^*}+\lambda\bm{I})^{-1}\nabla\mathcal{L}(\tilde{\bm{w}}^*,\mathcal{D}_r)\|+\norm{(\frac{\tilde{\bm{H}}^{-1}_{s,\lambda}}{H}-(\bm{H}_{\bm{w}^*}+\lambda\bm{I})^{-1})\nabla\mathcal{L}(\tilde{\bm{w}}^*,\mathcal{D}_r)} \\
&\leq\|(\bm{H}_{\bm{w}^*}+\lambda\bm{I})^{-1}\|\cdot\|\nabla\mathcal{L}(\tilde{\bm{w}}^*,\mathcal{D}_r)\|+\norm{\frac{\tilde{\bm{H}}^{-1}_{s,\lambda}}{H}-(\bm{H}_{\bm{w}^*}+\lambda\bm{I})^{-1}}\cdot\|\nabla\mathcal{L}(\tilde{\bm{w}}^*,\mathcal{D}_r)\|.
\end{aligned}
\end{equation}
According to \cref{eq:inverse norm}, we have $\|(\bm{H}_{\bm{w}^*}+\lambda\bm{I})^{-1}\|\leq\frac{1}{\lambda-\|\bm{H}_{\bm{w}^*}\|}$.
From \cref{eq:prob bound}, we can obtain $\|(\bm{H}_{\bm{w}^*}+\lambda\bm{I})^{-1}-\tilde{\bm{H}}^{-1}_{s,\lambda}/H\|>16\sqrt{\mathrm{ln}\,d/\rho}\cdot\frac{\lambda+L}{\lambda+\lambda_{min}}+\frac{1}{16}$ with a probability smaller than $\rho$.
Additionally, we have $\|\nabla\mathcal{L}(\bm{w}^*,\mathcal{D})\|,\|\nabla\mathcal{L}(\tilde{\bm{w}}^*,\mathcal{D}_r)\|\leq G$.
Incorporating these results into \cref{eq:bound term practical} we have
\begin{equation}\label{eq:final practical bound}
\begin{aligned}
\norm{\frac{\tilde{\bm{H}}^{-1}_{s,\lambda}}{H}\nabla\mathcal{L}(\tilde{\bm{w}}^*,\mathcal{D}_r)}\leq\left(\frac{1}{\lambda-\|\bm{H}_{\bm{w}^*}\|}+16\sqrt{\mathrm{ln}\,d/\rho}\cdot\frac{\lambda+L}{\lambda+\lambda_{min}}+\frac{1}{16}\right)G.
\end{aligned}
\end{equation}
To finish the proof, we incorporate \cref{eq:final practical bound} and results of \cref{thm:efficient upper bound} into \cref{eq:practical inequality}.

\end{proof}

\subsection{Proof of \cref{pro:sequential upper bound}}

\begin{proof}
Follow the proof of \cref{pro:practical upper bound}.
\end{proof}
Intuitively, the error bound will increase as the unlearning algorithm runs for multiple steps since the second term of the error bound $\|\bm{w}^*-\tilde{\bm{w}}^*\|_2^2$ in \Cref{lmm:error bound} tends to increase.
However, we consider the worst-case bound when tackling this term $\|\bm{w}\|_2\leq C\rightarrow\|\bm{w}^*-\tilde{\bm{w}}^*\|_2\leq2C$, which is independent of the unlearning steps.
As a result, the error bound remains unchanged regardless of how large the proportion of the unlearned set is. 
In addition, it means that the error bound can be tightened if we add further assumptions on the distance of the two minimal points $\|\bm{w}^*-\tilde{\bm{w}}^*\|_2$. 

\section{Comparison Between Convex and Nonconvex Objectives}\label{sec:comparison}
In this section, we focus on the results shown in \cref{thm:upper bound}.
Our results proposed in this paper do not require the objective to be convex.
However, if we further assume the objective to be convex, we can obtain a tighter approximation upper bound as follows.
\begin{proposition}\label{pro:convex upper bound}
Let $\bm{w}^*=\mathrm{argmin}_{\|\bm{w}\|_2\leq C}\mathcal{L}(\bm{w},\mathcal{D})$ and $\tilde{\bm{w}}^*=\mathrm{argmin}_{\|\bm{w}\|_2\leq C}\mathcal{L}(\bm{w},\mathcal{D}_r)$.
Let $\tilde{\bm{w}}=\bm{w}^*-\bm{H}_{\bm{w}^*}^{-1}\nabla\mathcal{L}(\bm{w}^*,\mathcal{D}_r)$ be an approximation of $\Tilde{\bm{w}}^*$.
Consider \cref{asp:continuity2} and assume $l(\bm{w},\bm{x})$ to be $K$-strongly convex with respect to $\bm{w}$, then we have
\begin{equation}\label{eq:convex upper bound}
\|\tilde{\bm{w}}-\tilde{\bm{w}}^*\|_2\leq\frac{2MC^2}{K}.
\end{equation}
\end{proposition}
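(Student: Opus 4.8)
The plan is to reuse \cref{lmm:error bound} directly, since its statement already bounds the approximation error by the product of $\|\bm{H}_{\bm{w}^*}^{-1}\|_2$ and $\|\bm{w}^*-\tilde{\bm{w}}^*\|_2^2$ under \cref{asp:continuity2} alone. In the current setting we do not add a local convex regularizer (the approximation is the plain Newton update $\tilde{\bm{w}}=\bm{w}^*-\bm{H}_{\bm{w}^*}^{-1}\nabla\mathcal{L}(\bm{w}^*,\mathcal{D}_r)$), so \cref{eq:preliminary upper bound} applies verbatim:
\begin{equation}
\|\tilde{\bm{w}}-\tilde{\bm{w}}^*\|_2\leq\frac{M}{2}\|\bm{H}_{\bm{w}^*}^{-1}\|_2\cdot\|\bm{w}^*-\tilde{\bm{w}}^*\|_2^2.
\end{equation}
It then remains to bound the two factors separately, and the new strong-convexity hypothesis is precisely what lets us control the inverse Hessian norm without the artificial $\lambda$ shift.

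First I would bound $\|\bm{H}_{\bm{w}^*}^{-1}\|_2$. Since each per-sample loss $l(\bm{w},\bm{x})$ is $K$-strongly convex, we have $\nabla^2 l(\bm{w},\bm{x})\succeq K\bm{I}$ for every $\bm{x}$, and averaging over $\mathcal{D}_r$ preserves this, so $\bm{H}_{\bm{w}^*}=\nabla^2\mathcal{L}(\bm{w}^*,\mathcal{D}_r)\succeq K\bm{I}$. Hence the smallest eigenvalue of $\bm{H}_{\bm{w}^*}$ is at least $K$. Following the same eigenvalue computation used in \cref{eq:inverse norm} (but without the $\lambda\bm{I}$ term), this yields $\|\bm{H}_{\bm{w}^*}^{-1}\|_2=1/\lambda_{min}[\bm{H}_{\bm{w}^*}]\leq 1/K$. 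Second, the constraint $\|\bm{w}\|_2\leq C$ imposed in the learning process gives $\|\bm{w}^*\|_2\leq C$ and $\|\tilde{\bm{w}}^*\|_2\leq C$, so by the triangle inequality (exactly as in \cref{eq:constraint norm}) we get $\|\bm{w}^*-\tilde{\bm{w}}^*\|_2\leq 2C$, and therefore $\|\bm{w}^*-\tilde{\bm{w}}^*\|_2^2\leq 4C^2$.

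Combining the two bounds, I would substitute $\|\bm{H}_{\bm{w}^*}^{-1}\|_2\leq 1/K$ and $\|\bm{w}^*-\tilde{\bm{w}}^*\|_2^2\leq 4C^2$ into \cref{eq:preliminary upper bound} to obtain
\begin{equation}
\|\tilde{\bm{w}}-\tilde{\bm{w}}^*\|_2\leq\frac{M}{2}\cdot\frac{1}{K}\cdot 4C^2=\frac{2MC^2}{K},
\end{equation}
which is the claimed inequality. I do not expect any genuine obstacle here: the whole argument is a specialization of \cref{lmm:error bound} in which strong convexity replaces the local convex approximation as the mechanism for lower-bounding the Hessian spectrum. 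The only point worth stating carefully is the implication \emph{$K$-strong convexity of each $l$ $\Rightarrow$ $\bm{H}_{\bm{w}^*}\succeq K\bm{I}$ $\Rightarrow$ $\|\bm{H}_{\bm{w}^*}^{-1}\|_2\leq 1/K$}, since this is exactly where the convexity assumption buys the tighter $O(1/K)$ factor in place of the $1/(\lambda+\lambda_{min})$ factor of the nonconvex bound in \cref{thm:upper bound}.
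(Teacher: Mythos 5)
Your proof is correct and follows essentially the same route as the paper's: apply \cref{lmm:error bound}, bound $\|\bm{H}_{\bm{w}^*}^{-1}\|_2\leq 1/K$ via strong convexity, bound $\|\bm{w}^*-\tilde{\bm{w}}^*\|_2\leq 2C$ via the norm constraint, and multiply. If anything, your justification of the inverse-Hessian bound through the minimum eigenvalue ($\bm{H}_{\bm{w}^*}\succeq K\bm{I}$) is stated more carefully than the paper's, which writes the less precise $\|\bm{H}_{\bm{w}^*}\|_2\geq K$.
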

\begin{proof}
As the loss function $l$ is $K$-strongly convex, the loss function $\mathcal{L}$ is also $K$-strongly convex.
Consequently, we have $\|\bm{H}_{\bm{w}^*}\|_2\geq K$ and $\|\bm{H}_{\bm{w}^*}^{-1}\|_2\leq\frac{1}{K}$.
In addition, \cref{eq:constraint norm} still holds.
By incorporating the upper bound $\|\bm{H}_{\bm{w}^*}^{-1}\|_2\leq\frac{1}{K}$ and \cref{eq:constraint norm} into \cref{lmm:error bound}, we can obtain \cref{eq:convex upper bound}.
\end{proof}
Considering the similarity between $\lambda+\lambda_{min}$ in \cref{eq:upper bound} and $K$ in \cref{eq:convex upper bound} (both measuring the objective's convexity), we use $\lambda+\lambda_{min}$ to replace $K$ in \cref{eq:convex upper bound} and derive the approximation error bound for convex models as $\frac{2MC^2}{\lambda+\lambda_{min}}$.
Comparing the result with \cref{eq:convex upper bound}, we find that the Newton update method has a lower approximation error bound for convex models versus nonconvex models.
This result highlights the fact that our method works under \textit{both convex and nonconvex objectives}. 
In particular, our approximation obtains a tighter upper bound for strongly convex objectives.

\section{Bounded Model Parameters is also Necessary for Convex Models}\label{sec:assumption}
In this section, we demonstrate our observation that previous works~\citep{sekhari2021remember} on certified unlearning for convex models implicitly rely on the requirement of bounded model parameters $\|\bm{w}\|_2\leq C$, which means our proposed constraint on model parameters is also suitable for convex models.

Previous works~\citep{sekhari2021remember} studied the certified unlearning and its generalization for convex models.
In particular, they jointly assume the objective to be $L$-Lipschitz continuous and $M$-strongly convex in the proof of certification.
Next, we demonstrate that these two assumptions jointly indicate $\|\bm{w}\|_2\leq C$.
Let $\bm{w}_1,\bm{w}_2$ be two models in the hypothesis space, and $\mathcal{L}$ be the loss function.
Based on the strong convexity, we have
\begin{equation}\label{eq:strong convexity}
\|\nabla\mathcal{L}(\bm{w}_1)-\nabla\mathcal{L}(\bm{w}_2)\|\geq M\|\bm{w}_1-\bm{w}_2\|.
\end{equation}
In addition, according to the Lipschitz continuity, we have
\begin{equation}\label{eq:lipschitz continuity}
\|\nabla\mathcal{L}(\bm{w})\|\leq L,
\end{equation}
for any $\bm{w}$ in the hypothesis space.
Incorporate Eq.~(\ref{eq:lipschitz continuity}) into Eq.~(\ref{eq:strong convexity}) and we have
\begin{equation}
2L\geq\|\nabla\mathcal{L}(\bm{w}_1)-\nabla\mathcal{L}(\bm{w}_2)\|\geq M\|\bm{w}_1-\bm{w}_2\|.
\end{equation}
Consequently, we have $\|\bm{w}_1-\bm{w}_2\|\leq\frac{2L}{M}$ for any $\bm{w}_1$ and $\bm{w}_2$ in the hypothesis space.
The certification requires any two models in the hypothesis space to have a bounded distance, which is a non-trivial condition.
Luckily, by letting $C=\frac{L}{M}$, our constraint on model parameters $\|\bm{w}\|\leq C$ satisfy the condition for certification.
Hence, we find that the assumptions in previous works naturally necessitate a constraint of bounded norm to the model parameters.
In this paper, we exploit projected gradient descent to actively restrict the norm of model parameters for bounding the approximation error.
We argue that this technique is also necessary for fulfilling the assumptions made in previous works to derive the certification for convex models.

\section{Implementation}\label{sec:implementation}
\begin{table}[t]
% \footnotesize
% \tabcolsep = 8pt
\centering
\caption{The hyperparameter settings of original models on the corresponding datasets.}
% \vspace{-2mm}
\aboverulesep = 0pt
\belowrulesep = 0pt
\begin{tabular}{lccc}
\toprule
Hyperparameter & MLP & AllCNN & ResNet \\
\midrule
learning rate & $1e^{-3}$ & $1e^{-3}$ & $1e^{-3}$ \\
weight decay & $5e^{-4}$ & $5e^{-4}$ & $5e^{-4}$ \\
epochs & 50 & 50 & 50 \\
dropout & 0.5 & 0.5 & 0.5 \\
batch size & 128 & 128 & 128 \\
param bound $C$ & 10 & 20 & 20 \\
\bottomrule
\end{tabular}
\label{tab:original settings}
\end{table}

We implemented all experiments in the PyTorch~\citep{paszke2019pytorch} package and exploited Adam~\citep{kingma2015adam} as the optimizer for training.
For the training of original models, we exploited Adam as the optimizer.
We set the learning rate as $1e^{-3}$, the weight decay parameter as $5e^{-4}$, and the training epochs number as $50$.
We ran all experiments on an Nvidia RTX A6000 GPU.
All experiments are conducted based on three real-world datasets: MNIST~\citep{lecun1998gradient}, CIFAR-10~\citep{krizhevsky2009learning}, and SVHN~\citep{netzer2011reading}.
All datasets are publicly accessible (MNIST with GNU General Public License, CIFAR-10 with MIT License, and SVHN with CC BY-NC License).
We reported the average value and the standard deviation of the numerical results under three different random seeds.
For the relearn time in \Cref{tab:unlearn metric}, we directly report the rounded mean value without the standard deviation as the value of the epoch number is supposed to be an integer.
The unlearned data is selected randomly from the training set.
Detailed hyperparameter settings of the original models are presented in Table~\ref{tab:original settings}.
The hyperparameter settings of unlearning baselines are shown as follows.
\begin{itemize}[leftmargin=*]
\item \textbf{Retrain from scratch}: size of unlearned set: 1,000.
\item \textbf{Fine tune}: size of unlearned set: 1,000; learning rate: $1e^{-3}$; epochs: 1.
\item \textbf{Negative gradient}: size of unlearned set: 1,000; learning rate: $1e^{-4}$; epochs: 1.
\item \textbf{Fisher forgetting}: size of unlearned set: 1,000; $\alpha$: $1e^{-6}$ for MLP, $1e^{-8}$ for AllCNN and ResNet.
\item \textbf{L-CODEC}: size of unlearned set: 1,000; number of perturbations: 25; Hessian type: Sekhari; $\varepsilon$: 100; $\delta$: 0.1; $\ell$-2 regularization: $5e^{-4}$.
\item \textbf{Certified unlearning}: size of unlearned set: 1,000; number of recursion $s$: 1,000; standard deviation $\sigma$: $1e^{-2}$ for MLP, $1e^{-3}$ for AllCNN and ResNet; continuity coefficients $L$: 1, $M$: 1; minimal eigenvalue of Hessian $\lambda_{min}$: 0; convex coefficient $\lambda$: 1 for MLP, 200 for AllCNN, 2,000 for ResNet; Hessian scale $H$: 10 for MLP, 20,000 for AllCNN, 50,000 for ResNet.
\end{itemize}
The detailed meanings of hyperparameters in unlearning baselines can be found in the original papers.
For $\lambda$, since finding a practical upper bound of the norm of Hessian can be intractable for real-world tasks, we follow most previous works~\citep{koh2017understanding,wu2023gif,wu2023certified} to set it as a hyperparameter which can be chosen flexibly to adapt to different scenarios. 
In addition, we compute the norm of the Hessian in the case of MLP over MNIST. 
To reduce the time complexity, we use the Hessian with respect to a single random mini-batch of $D_r$ as an unbiased estimation of the full Hessian. 
The results of the mean value and the variance of the norm of the stochastic Hessian (under 10 different random seeds) is $12.11 \pm 0.63$, which falls into the range of $\lambda$ in \Cref{tab:ablation}. 
When $\lambda<12.11$, the computed error bound is still valid (from \Cref{tab:ablation}, as the value of $\lambda$ decreases, the computed error bound increases correspondingly, indicating that a smaller choice of $\lambda$ can still lead to a valid but larger error bound). 
In this case, the certification requires adding a larger noise to hide the (overestimated) remaining information of the unlearned data.
Different from $\lambda$, the values of $L$, $M$, and $\lambda_{min}$ only affect the value of the approximation error bound $\Delta$.
In practice, following previous works~\citep{guo2020certified}, we first determine the variance of noise $\sigma$ (to preserve the model utility after adding noise) and then obtain the certification level $\varepsilon$ and $\delta$ which can be achieved.
Hence, the certification holds for any choices of $L$, $M$, and $\lambda_{min}$, but the calculated certification level can be imprecise in some cases.
To weaken the dependency of the approximation error bound on the Lipschitz constants, we leave more general theoretical results as future works.
It is also worth noting that the values of certification budgets $\varepsilon$ and $\delta$ are flexible in our experiments. 
% We can always fix one and change the other for specific needs by adjusting the noise. 
In particular, we fixed the noise and the budgets $\varepsilon$ and $\delta$ can be tuned in a range for specific needs (we have $\varepsilon\in[1e^2,5e^3]$ when $\delta\in[0.1,1]$, and the decrease of one can lead to the increase of the other).
Detailed results regarding the certification budgets are shown in the parameter study in \cref{sec:parameter}.
In addition, we list some key packages in Python required for implementing certified unlearning as follows.
\begin{itemize}[leftmargin=*]
\item python == 3.9.16
\item torch == 1.12.1
\item torchvision == 0.13.1
\item numpy == 1.24.3
\item scikit-learn == 1.2.2
\item scipy == 1.10.1
\end{itemize}

\section{Additional Experiments}

\subsection{Effect of Bounded Model Parameters}
In our experiments, we adopt PGD during the training of the original model and the retrained model.
PGD is conducted solely for ensuring the strictness of our theoretical results while trying not to affect the model utility. 
To achieve this, we deliberately choose the value of $C$ in the constraint $|w|_2<C$ to make PGD have little impact on the model utility. 
After verifying with experiments, we find that $C=10$ for MLP over MNIST, $C=20$ for CNN over CIFAR-10, and $C=20$ for ResNet over SVHN can be a desirable choice. 
In this experiment, we record and report the utility metrics (micro F1-score over the test set) of the original model and retrained model without conducting PGD in \Cref{tab:pgd_effect}.
Compared with the corresponding results in \Cref{tab:performance}, we can verify that our adopted PGD modification with a carefully chosen constraint does not affect the model utility distinctly.

\begin{table}[t]
% \footnotesize
% \tabcolsep = 8pt
\centering
\vspace{-3mm}
\caption{The micro F1-score of the original model and the retrained model without using PGD.}
% \vspace{-2mm}
\aboverulesep = 0pt
\belowrulesep = 0pt
\begin{tabular}{lccc}
\toprule
 & MLP \& MNIST & CNN \& CIFAR-10 & ResNet \& SVHN \\
\midrule
Original & 97.03 $\pm$ 0.25 & 83.16 $\pm$ 0.62 & 93.92 $\pm$ 0.19 \\
Retrain & 96.80 $\pm$ 0.20 & 83.39 $\pm$ 0.59 & 93.80 $\pm$ 0.31 \\
\bottomrule
\end{tabular}
\label{tab:pgd_effect}
\end{table}

\begin{table}[t]
\small
\renewcommand{\arraystretch}{1.05}
\tabcolsep = 3pt
\centering
\vspace{-3mm}
\caption{Comparison of accuracy among three advanced unlearning baselines over three popular DNNs across three real-world datasets. We record the micro F1-score of the predictions on the unlearned set $\mathcal{D}_u$, retained set $\mathcal{D}_r$, and test set $\mathcal{D}_t$.}
\label{tab:additional_experiment}
\vspace{1mm}
\aboverulesep = 0pt
\belowrulesep = 0pt
\begin{tabular}{c|ccc|ccc|ccc}
\toprule
\multirow{2}{*}{\textbf{Method}} & \multicolumn{3}{c|}{\textbf{MLP \& MNIST}} & \multicolumn{3}{c|}{\textbf{AllCNN \& CIFAR-10}} & \multicolumn{3}{c}{\textbf{ResNet18 \& SVHN}} \\
& F1 on $\mathcal{D}_u$ & F1 on $\mathcal{D}_r$ & F1 on $\mathcal{D}_t$ & F1 on $\mathcal{D}_u$ & F1 on $\mathcal{D}_r$ & F1 on $\mathcal{D}_t$ & F1 on $\mathcal{D}_u$ & F1 on $\mathcal{D}_r$ & F1 on $\mathcal{D}_t$ \\
\midrule
SCRUB & 98.10 {\scriptsize$\pm$ 0.60} & 98.12 {\scriptsize$\pm$ 0.05} & 97.25 {\scriptsize$\pm$ 0.04} & 90.73 {\scriptsize$\pm$ 2.63} & 96.83 {\scriptsize$\pm$ 0.82} & 86.31 {\scriptsize$\pm$ 0.31} & 94.07 {\scriptsize$\pm$ 0.38} & 96.09 {\scriptsize$\pm$ 1.25} & 94.02 {\scriptsize$\pm$ 0.38} \\
NegGrad+ & 97.43 {\scriptsize$\pm$ 0.60} & 98.92 {\scriptsize$\pm$ 0.01} & 97.68 {\scriptsize$\pm$ 0.07} & 80.70 {\scriptsize$\pm$ 8.79} & 96.58 {\scriptsize$\pm$ 0.47} & 86.05 {\scriptsize$\pm$ 1.52} & 91.03 {\scriptsize$\pm$ 1.32} & 97.58 {\scriptsize$\pm$ 0.45} & 95.07 {\scriptsize$\pm$ 0.12} \\
$\ell$-1 Sparse & 97.83 {\scriptsize$\pm$ 0.47} & 98.40 {\scriptsize$\pm$ 0.19} & 97.24 {\scriptsize$\pm$ 0.12} & 87.17 {\scriptsize$\pm$ 2.80} & 92.26 {\scriptsize$\pm$ 0.37} & 83.99 {\scriptsize$\pm$ 0.32} & 94.33  {\scriptsize$\pm$ 0.61} & 95.55 {\scriptsize$\pm$ 0.33} & 93.67 {\scriptsize$\pm$ 0.16} \\
\bottomrule
\end{tabular}
% \vspace{-3mm}
\end{table}

\subsection{Comparison with Advanced Baselines}
To strengthen the soundness of certified unlearning, we supply three more advanced baselines for approximate unlearning: SCRUB~\citep{kurmanji2023towards}, NegGrad+~\citep{liu2023model}, and $\ell$-1 Sparse~\citep{liu2023model}. 
For SCRUB, we adopt the provided hyperparameter settings in the code base (small scale for MLP and large scale for CNN and ResNet). 
For NegGrad+, we tune the hyperparameter manually and find the optimal setting $\alpha$=0.95, epoch=1, and learning rate=1e-4. 
For $\ell$-1 Sparse~\citep{liu2023model}, the proposed framework consists of two parts, pruning and unlearning. 
The pruning step is orthogonal to all other unlearning baselines. 
Hence, we drop the pruning step for all baselines to ensure a fair comparison (similar to the case that we use PGD during training for all baselines).
It is worth noting that other than pruning, $\ell$-1 sparse unlearning proposes a novel unlearning technique as well (adding an $\ell$-1 penalty to the objective). 
As a result, we include $\ell$-1 sparse unlearning without pruning in the additional results shown in \Cref{tab:additional_experiment}.
Comparing the additional results with \Cref{tab:performance}, we can observe that our proposed certified method still has the most desirable and robust performance. 
Compared with the negative gradient method, NegGrad+ improves the utility over the retain and test sets but still has a lower performance on the forget set.
With the $\ell$-1 penalty, the $\ell$-1 sparse unlearning has a distinct improvement compared with vanilla fine-tuning. 

%%%%%%%%%%%%%%%%%%%%%%%%%%%%%%%%%%%%%%%%%%%%%%%%%%%%%%%%%%%%%%%%%%%%%%%%%%%%%%%
%%%%%%%%%%%%%%%%%%%%%%%%%%%%%%%%%%%%%%%%%%%%%%%%%%%%%%%%%%%%%%%%%%%%%%%%%%%%%%%

\end{document}